\def\1{\mathbf{1}}
\def\0{\mathbf{0}}
\def\X{{\bf X}}
\def\x{{\bf x}}
\def\Y{{\bf Y}}
\def\y{{\bf y}}
\def\S{{\bf S}}
\def\s{{\bf s}}
\def\T{{\bf T}}
\def\I{{\bf I}}
\def\g{{\mathbf g}}
\def\q{{\mathbf q}}
\def\p{{\mathbf p}}
\def\Z{{\bf Z}}
\def\p{{\bf p}}
\def\H{{\bf H}}
\def\tr{{\mathrm{tr}}}
\def\nnz{{\mathbf{nnz}}}
\newtheorem{remark}{\bf Remark}
\newtheorem{definition}{\bf Definition}
\newtheorem{lemma}{\bf Lemma}
\newtheorem{theorem}{\bf Theorem}
\newtheorem{corollary}{\bf Corollary}
\newtheorem{proposition}{\bf Proposition}
\DeclarePairedDelimiter{\abs}{\lvert}{\rvert}
\DeclarePairedDelimiter{\norm}{\lVert}{\rVert}
\DeclarePairedDelimiter{\ceil}{\lceil}{\rceil}
\DeclarePairedDelimiter{\prn}{\lparen}{\rparen}
\DeclarePairedDelimiter{\brk}{\lbrack}{\rbrack}
\DeclarePairedDelimiter{\ang}{\langle}{\rangle}
\newenvironment{proof}{\hspace{0ex}\textbf{\textsc{Proof}}.\hspace{1ex}}{\hfill$\blacksquare$\newline}
\newcommand*\dif{\mathop{}\!\mathrm{d}}
\begin{document}

\title{Computationally Efficient Approximations for Matrix-based R\'enyi's Entropy}

\author{
    Tieliang~Gong$^{\dagger,*}$,
    Yuxin~Dong$^\dagger$,
    Shujian~Yu,
    Bo~Dong
\thanks{$^\dagger$These authors contribute equally.}
\thanks{$^*$Corresponding author: Tieliang Gong.}
\thanks{T. Gong (adidasgtl@gmail.com), Y. Dong (dongyuxin@stu.xjtu.edu.cn) and B. Dong (dong.bo@mail.xjtu.edu.cn) are with the School of Computer Science and Technology, Xi'an Jiaotong University and Shaanxi Provincial Key Laboratory of Big Data Knowledge Engineering, Xi'an 710049, China.}
\thanks{S. Yu (yusj9011@gmail.com) is with the Machine Learning Group, UiT - The Arctic University of Norway and Department of Computer Science, Vrije University Amsterdam, Amsterdam.}}

% The paper headers
\markboth{IEEE Transactions on Signal Processing}%
{Gong \MakeLowercase{\textit{et al.}}: Computationally Efficient Approximations for Matrix-based R\'enyi's Entropy}

% \IEEEpubid{0000--0000/00\$00.00~\copyright~2021 IEEE}
% Remember, if you use this you must call \IEEEpubidadjcol in the second
% column for its text to clear the IEEEpubid mark.

\maketitle

\begin{abstract}
The recently developed matrix-based R\'enyi’s $\alpha$-order entropy enables measurement of information in data simply using the eigenspectrum of symmetric positive semi-definite (PSD) matrices in reproducing kernel Hilbert space, without estimation of the underlying data distribution. This intriguing property makes this new information measurement widely adopted in multiple statistical inference and learning tasks. However, the computation of such quantity involves the trace operator on a PSD matrix $G$ to power $\alpha$ (i.e., $\tr(G^\alpha)$), with a normal complexity of nearly $\mathcal{O}(n^3)$, which severely hampers its practical usage when the number of samples (i.e., $n$) is large. In this work, we present computationally efficient approximations to this new entropy functional that can reduce its complexity to even significantly less than $\mathcal{O}(n^2)$. To this end, we leverage the recent progress on Randomized Numerical Linear Algebra, developing Taylor, Chebyshev and Lanczos approximations to $\tr(G^\alpha)$ for arbitrary values of $\alpha$ by converting it into a matrix-vector multiplication problem. We also establish the connection between the matrix-based R\'enyi’s entropy and PSD matrix approximation, which enables exploiting both clustering and block low-rank structure of $G$ to further reduce the computational cost. We theoretically provide approximation accuracy guarantees and illustrate the properties for different approximations. Large-scale experimental evaluations on both synthetic and real-world data corroborate our theoretical findings, showing promising speedup with negligible loss in accuracy.
\end{abstract}

\begin{IEEEkeywords}
Matrix-based R\'enyi's Entropy, Randomized Numerical Linear Algebra, Trace Estimation, Information Bottleneck, Mutual Information.
\end{IEEEkeywords}

\section{Introduction}
\IEEEPARstart{I}{nformation-theoretic} measures (such as entropy, mutual information, synergy) and principles (such as information bottleneck~\cite{tishby1999information} and maximum entropy~\cite{jaynes1957information,jaynes1957information2}) have a long track record of usefulness in machine learning and neuroscience~\cite{principe2010information,victor2006approaches,timme2018tutorial}. Notable examples include feature selection by maximizing the mutual information between selected feature subset and class labels~\cite{battiti1994using} and the utilization of partial information decomposition (PID) framework~\cite{williams2010nonnegative} to quantify the neural goal functions~\cite{wibral2017partial}. In the deep learning area, information-theoretic methods have become the workhorse of several impressive deep learning breakthroughs in recent years, ranging from the representation learning by variational information bottleneck~\cite{alemi2016deep} or information maximization (InfoMax)~\cite{RDevonHjelm2018LearningDR} to theoretical investigations on the generalization bound~\cite{xu2017information}.

The combination of information theory with machine learning and neuroscience usually involves an exact access to different information-theoretic measures defined over the probability space. However, the accurate probability density function (PDF) estimation is computationally infeasible in high-dimensional space~\cite{fan2006statistical}, which impedes more widespread adoption of information-theoretic methods in data-driven science. 

In~\cite{giraldo2014measures,giraldo2014information} and later~\cite{yu2019multivariate}, a novel information measure named the matrix-based R\'enyi's $\alpha$-order entropy was proposed, enabling us to quantify the information of a single variable or complex interactions across multiple variables directly from given samples. Distinct from the classical Shannon entropy~\cite{shannon1948mathematical} and R\'enyi's entropy~\cite{renyi1961measures}, this new family of information measures is defined on the eigenspectrum of a Gram matrix constructed by projecting data in reproducing kernel Hilbert space (RKHS), thus avoiding the expensive evaluation of underlying distributions. This elegant property makes the matrix-based entropy functional being a reliable choice in lots of machine learning and neuroscience applications, which include but are not limited to similarity measurement~\cite{brockmeier2017quantifying}, feature reduction~\cite{alvarez2017kernel}, compressing deep neural networks by neuron pruning~\cite{sarvani2022hrel}, identifying the most informative regions in the brain (i.e., subgraph) to psychiatric disorders (such as depression and autism)~\cite{zheng2022brainib}, inferring the effective connectivity across different brain areas~\cite{de2019data}.

Despite of its practical utility, exactly computing matrix-based R\'enyi's entropy involves calculating eigenvalues of a Gram matrix $G$ of size $n \times n$ ($n$ is the sample size, please refer to Definition \ref{Renyi}), which requires $\mathcal{O}(n^3)$ computational complexity with traditional techniques such as Singular Value Decomposition (SVD) \cite{elad2010sparse, li2014large}, CUR decomposition \cite{mahoney2009cur, mahoney2011randomized} or QR Factorization \cite{watkins2008qr, watkins2004fundamentals}. This drawback poses great challenges for both storage and computing in practice. We therefore seek for computationally efficient methods with statistical guarantees to approximate the matrix-based R\'enyi's entropy. To this end, we borrow the idea of stochastic trace estimation~\cite{avron2011randomized} for integer $\alpha$-order entropy approximation, where the trace estimation problem is transferred into matrix-vector multiplications. We then employ polynomial approximation techniques to estimate the power of $G$ for fractional-order $\alpha$. Furthermore, we establish the connection between matrix-based R\'enyi’s entropy estimation and Gram matrix approximation, where the $k$-means algorithm is employed to discover the block structure of $G$ and a low-rank decomposition is conducted to approximate the off-diagonal blocks. In summary, our contributions are four-fold: 
\begin{itemize}
    \item We develop Taylor, Chebyshev and Lanczos approximations for arbitrary $\alpha$-order (integer \& fractional) matrix-based R\'enyi's entropy that transforms the original trace estimation problem into matrix-vector multiplications, reducing the overall complexity to roughly $\mathcal{O}(n^2s)$, where $s \ll n$ is the number of queried random vector.
    \item We additionally reveal the intrinsic connection between eigenspectrum estimation and Gram matrix approximation. To this end, we investigate the spectral structure of the Gram matrix and design a novel block low-rank approximation method, which enables us to further reduce the computational burden to $\mathcal{O}(n^2/c + nck)$, where $c \ll n$ is the number of clusters and $k \ll n$ is the order of rank.
    \item Theoretically, we conduct statistical analysis for each of the estimators above, establishing accuracy guarantees and revealing their intrinsic properties and connections.
    \item We evaluate the effectiveness of the proposed approximations on large-scale simulation and real-world datasets, showing remarkable acceleration on various information-related tasks with no significant loss in accuracy.
\end{itemize}

% needed in second column of first page if using \IEEEpubid
%\IEEEpubidadjcol

\section{Preliminaries} \label{sec:pre}
Entropy measures the uncertainty of random variables using a single scalar quantity \cite{klir2013uncertainty}. For a continuous random variable $X$ with PDF $p(\x)$ defined on the finite set $\mathcal{X}$, the $\alpha$-order R\'enyi's entropy ($\alpha > 0$ and $\alpha \ne 1$) $\H_\alpha(X)$ is defined as
\begin{equation} \label{eq_renyi}
    \H_{\alpha}(X) = \frac{1}{1 - \alpha} \log \int_{\mathcal{X}} p^\alpha(\x) \dif \x.
\end{equation}
The limit case of $\alpha \rightarrow 1$ yields the well known Shannon's entropy, i.e. $\H(X) = -\int_{\mathcal{X}} p(\x) \log p(\x) \dif \x$. It is easy to see that R\'enyi's entropy is a natural extension of Shannon entropy by introducing a hyper-parameter $\alpha$. In real-world applications, the choice of $\alpha$ is task-specific: on the one hand, $\alpha$ should be less than $2$ or even $1$ when the learning task requires estimating tails of the distribution or multiple modalities; on the other hand, $\alpha$ is suggested to be greater than $2$ to emphasize mode behavior when we aim to characterize the mean behavior \cite{principe2010information}.

It is worth noting that calculating R\'enyi $\alpha$-order entropy requires the prior about the PDF of data, which prevents its more widespread adoption in data-driven science. To alleviate this issue, an alternative measure, namely the matrix-based R\'enyi's $\alpha$-order entropy was recently proposed, which resembles quantum R\'enyi’s entropy in terms of the eigenspectrum of a normalized Hermitian matrix constructed by projecting data in RKHS. It is defined as:
\begin{definition}~\cite{giraldo2014measures} \label{Renyi}
    Let $\varphi: \mathcal{X} \times \mathcal{X} \mapsto \mathbb{R}$ be a real valued positive kernel that is also infinitely divisible \cite{bhatia2006infinitely}. Given $ \{\x_i \}_{i=1}^n \subset \mathcal{X}$, each $\x_i$ being a real-valued scalar or vector, and the Gram matrix $K$ obtained from $K_{ij} = \varphi(\x_i, \x_j)$, a matrix-based analogue to R\'enyi's $\alpha$-entropy can be defined as:
    \begin{equation} \label{eq:renyi_entropy}
    	\S_\alpha(G) = \frac{1}{1-\alpha}\log_2(\tr(G^\alpha)) = \frac{1}{1-\alpha}\log_2\brk*{\sum_{i=1}^n \lambda_i^\alpha(G)},
    \end{equation}
    where $G_{ij} = \frac{1}{n}\frac{K_{ij}}{\sqrt{K_{ii}K_{jj}}}$ is a normalized kernel matrix and $\lambda_i(G)$ denotes the $i$-th eigenvalue of $G$.
\end{definition}

Let $\Delta_n^+$ be the set of positive semi-definite matrices of size $n \times n$ whose elements take real values. From \cite{giraldo2014measures} we know that $\S_\alpha(PGP^*) = \S_\alpha(G)$ for any orthonormal matrix $P$, and that $\S_\alpha(\frac{1}{n} I) = \log_2(n)$ for identity matrix $I \in \Delta_n^+$ takes the maximum entropy value among all $n \times n$ kernel matrices. We denote the maximum and minimum eigenvalue of $G$ as $\lambda_{max}$ and $\lambda_{min}$ respectively, the condition number is then $\kappa = \lambda_{max}/\lambda_{min}$.

\begin{definition}~\cite{yu2019multivariate}
    Given a collection of $n$ samples $\{\s_i=(\x_i^1,\x_i^2,\cdots, \x_i^L)\}_{i=1}^n$, each sample contains $L$ ($L\geq2$) measurements $\x^1\in \mathcal{X}^1$, $\x^2\in \mathcal{X}^2$, $\cdots$, $\x^L\in \mathcal{X}^L$ obtained from the same realization, and the positive definite kernels $\varphi_1:\mathcal{X}^1\times \mathcal{X}^1\mapsto\mathbb{R}$, $\varphi_2:\mathcal{X}^2\times \mathcal{X}^2\mapsto\mathbb{R}$, $\cdots$, $\varphi_L:\mathcal{X}^L\times \mathcal{X}^L\mapsto\mathbb{R}$, a matrix-based analogue to R{\'e}nyi's $\alpha$-order joint-entropy among $L$ variables can be defined as:
    \begin{equation}\label{eq:renyi_joint}
    	\mathbf{S}_\alpha(G^1,G^2,\cdots, G^L)=\mathbf{S}_\alpha\left(\frac{G^1\circ G^2\circ\cdots\circ G^L}{\mathrm{tr}(G^1\circ G^2\circ\cdots\circ G^L)}\right),
    \end{equation}
    where $(G^1)_{ij}=\varphi_1(\x_i^1,\x_j^1)$, $(G^2)_{ij}=\varphi_2(\x_i^2,\x_j^2)$, $\cdots$, $(G^L)_{ij}=\varphi_L(\x_i^L,\x_j^L)$, and $\circ$ denotes the Hadamard product.
\end{definition}
Given Eq.~(\ref{eq:renyi_entropy}) and (\ref{eq:renyi_joint}), the matrix-based R\'enyi's $\alpha$-order mutual information $I_\alpha(\cdot; \cdot)$ and total correlation $T_\alpha(\cdot)$ amongst $\x^1, \x^2, \cdots, \x^L$ and $\y$ in analogy of Shannon's definitions can be derived accordingly:
\begin{align}
    \I_\alpha(G^1, \cdots, G^L; G) =&\, \S_\alpha(G^1, \cdots, G^L) + \S_\alpha(G) \nonumber\\
    &- \S_{\alpha}(G^1, \cdots, G^L, G). \label{eq:Renyi_MI} \\
    \T_\alpha(G^1, \cdots, G^L) =&{} \brk*{\sum_{i=1}^{L} {\S_\alpha(G^i)}} - \S_{\alpha}(G^1, \cdots, G^L). \label{eq:Renyi_TC}
\end{align}
It is simple to verify that both of them are permutation-invariant to the ordering of kernel matrices $G^1, \cdots, G^L$. Furthermore, the matrix-based entropy functionals are independent of the specific dimensions of variables $\mathbf{x}^1, \cdots, \mathbf{x}^L$, thus avoiding estimation of the underlying data distributions and making them suitable to be applied on high-dimensional data with either discrete or continuous distributions.

\section{Approximating Matrix-based R\'enyi's Entropy: Randomized Linear Algebra Approaches}
In this section, we present computationally efficient approaches with statistical guarantees for matrix-based R\'enyi's entropy approximation from the perspective of randomized numerical linear algebra, where the general idea is to approximate the trace operator by random matrix-vector multiplications to avoid the expensive eigenvalue decomposition.

\subsection{Randomized Approximation}
Inspired by the work on randomized trace estimation \cite{avron2011randomized}, we adopt random algorithms for calculating the matrix-based R\'enyi's entropy. The following lemma \cite{roosta2015improved} characterizes an algorithm that approximates the trace of any symmetric positive semi-definite (PSD) matrix by computing inner products of the matrix with Gaussian random vectors:
\begin{lemma} \cite{avron2011randomized}
    \label{TraceEstim}
    Let $G \in \mathbb{R}^{n \times n}$ be a symmetric positive semi-definite matrix. If $\g_1$, $\g_2$, $\cdots$, $\g_s \in \mathbb{R}^n$ are i.i.d random standard Gaussian vectors, then for $s = \ceil{8\ln(2/\delta)/\epsilon^2}$, with probability at least $1-\delta$,
    \begin{equation*}
    	\abs*{\tr(G)-\frac{1}{s}\sum_{i=1}^s \g_i^\top G \g_i} \le \epsilon \cdot \tr(G).
    \end{equation*}
\end{lemma}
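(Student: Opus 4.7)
The plan is to follow the standard Hutchinson-style argument, leveraging the rotational invariance of the Gaussian distribution together with a Chernoff bound on the moment generating function (MGF) of a weighted chi-squared sum. Write $\hat T_s = \frac{1}{s}\sum_{i=1}^s \g_i^\top G \g_i$ for the estimator, and diagonalize $G = U \Lambda U^\top$ with $\Lambda = \operatorname{diag}(\lambda_1,\ldots,\lambda_n)$ and $\lambda_i \ge 0$.

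First I would establish unbiasedness: since $U^\top \g_i$ is again standard Gaussian, $\g_i^\top G \g_i$ has the same distribution as $\sum_{j=1}^n \lambda_j z_{ij}^2$ with $z_{ij}$ i.i.d.\ $N(0,1)$, so $\mathbb{E}[\g_i^\top G \g_i] = \sum_j \lambda_j = \tr(G)$. Next I would compute the MGF of the centered variable $Y_i = \g_i^\top G \g_i - \tr(G)$: for $0 < t < 1/(2\lambda_{\max})$,
\begin{equation*}
\mathbb{E}[e^{t Y_i}] = e^{-t\tr(G)} \prod_{j=1}^n (1 - 2t\lambda_j)^{-1/2}.
\end{equation*}
Taking logarithms and using the elementary inequality $-\tfrac{1}{2}\ln(1-2t\lambda_j) - t\lambda_j \le 2 t^2 \lambda_j^2$ valid for $2t\lambda_j \le \tfrac{1}{2}$, I obtain
\begin{equation*}
\ln \mathbb{E}[e^{t Y_i}] \le 2 t^2 \sum_{j=1}^n \lambda_j^2 = 2 t^2 \tr(G^2).
\end{equation*}

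The crucial PSD-specific step is the bound $\tr(G^2) = \sum_j \lambda_j^2 \le \lambda_{\max} \sum_j \lambda_j \le \tr(G)^2$, which is exactly what allows the final tail bound to scale multiplicatively with $\tr(G)$. Applying the Chernoff inequality to the average $\hat T_s - \tr(G) = \frac{1}{s}\sum_i Y_i$ and using the i.i.d.\ property,
\begin{equation*}
\Pr[\hat T_s - \tr(G) > \epsilon \tr(G)] \le \exp\bigl(-s t \epsilon \tr(G) + 2 s t^2 \tr(G)^2\bigr),
\end{equation*}
and optimizing over $t$ (choosing $t = \epsilon/(4\tr(G))$, which indeed satisfies $t < 1/(2\lambda_{\max})$ for $\epsilon \le 2$) gives an upper tail bound of $\exp(-s\epsilon^2/8)$. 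An entirely symmetric argument, applied to $-Y_i$ with the MGF bound $-t\lambda_j - \tfrac{1}{2}\ln(1+2t\lambda_j)\le 2t^2\lambda_j^2$, handles the lower tail. Combining via a union bound yields a total failure probability of at most $2\exp(-s\epsilon^2/8)$, and setting this to $\delta$ produces exactly $s = \lceil 8 \ln(2/\delta)/\epsilon^2 \rceil$.

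The main obstacle I anticipate is being careful with the MGF inequality and its range of validity. Verifying $-\tfrac{1}{2}\ln(1-x) - \tfrac{x}{2} \le \tfrac{x^2}{2}$ for $x \in [0,1/2]$ (and the analogue for the lower tail) together with checking that the optimal $t$ stays in the admissible region $(0, 1/(2\lambda_{\max}))$ requires handling the edge case $\epsilon$ close to $1$ and exploiting the PSD property $\lambda_{\max} \le \tr(G)$. The remainder of the argument is routine Chernoff bounding and clean substitution.
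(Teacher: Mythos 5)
The paper does not actually prove this lemma; it is imported verbatim from the cited literature (Avron--Toledo and Roosta-Khorasani--Ascher), so there is no in-paper argument to compare against. Your proposal reconstructs the standard proof of the Gaussian trace-estimator bound and it is correct: diagonalizing $G$ and using rotational invariance reduces $\g_i^\top G\g_i$ to a nonnegative-weighted chi-squared sum, the sub-gamma MGF bound $-\tfrac12\ln(1-x)-\tfrac{x}{2}\le\tfrac{x^2}{2}$ on $x\in[0,1/2]$ gives $\ln\mathbb{E}[e^{tY_i}]\le 2t^2\tr(G^2)$, and the PSD-specific step $\tr(G^2)\le\lambda_{\max}\tr(G)\le\tr(G)^2$ is exactly what converts this into a multiplicative bound; optimizing at $t=\epsilon/(4\tr(G))$ then yields the tail $\exp(-s\epsilon^2/8)$ and the stated $s=\lceil 8\ln(2/\delta)/\epsilon^2\rceil$, matching the constant in the cited result. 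Two small points to tidy up: first, the admissibility check should be against the range where your quadratic MGF inequality holds, namely $2t\lambda_{\max}\le 1/2$, i.e.\ $t\le 1/(4\lambda_{\max})$, which your choice of $t$ satisfies for $\epsilon\le 1$ (not the weaker $t<1/(2\lambda_{\max})$ condition for mere existence of the MGF, which is what your "$\epsilon\le 2$" remark refers to); since the lemma is only meaningful for $\epsilon\in(0,1)$ this costs nothing, but the proof should state it. Second, the lower-tail inequality has a sign slip: the quantity to bound is $t\lambda_j-\tfrac12\ln(1+2t\lambda_j)$, not $-t\lambda_j-\tfrac12\ln(1+2t\lambda_j)$; the corrected version follows from $x-\ln(1+x)\le x^2/2$ for $x\ge 0$ and holds for all $t>0$, so the lower tail goes through as you intend.
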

Lemma \ref{TraceEstim} immediately implies Algorithm \ref{alg:IntAlgo} for integer order matrix-based R\'enyi's entropy estimation. With conventional eigenvalue-based approaches, exactly calculating the matrix-based R\'enyi's entropy generally requires $\mathcal{O}(n^3)$ time complexity. Algorithm \ref{alg:IntAlgo} successfully transforms the eigenvalue problem into matrix-vector multiplications whose complexity is $\mathcal{O}(\alpha s \cdot \mathbf{nnz}(G))$, where $s \ll n$ is the number of random vector queries and $\mathbf{nnz}(G)$ denotes the number of non-zero elements in $G$ ($\mathbf{nnz}(G) \approx n^2$ for dense matrix $G$). Theorem \ref{IntBound} provides the quality-of-approximation result for Algorithm \ref{alg:IntAlgo}. 

\begin{algorithm}[t]
    \caption{Integer order matrix-based R\'enyi's entropy estimation}
    \label{alg:IntAlgo}
    \begin{algorithmic}[1]
    	\STATE \textbf{Input:} Kernel matrix $G \in \mathbb{R}^{n \times n}$, number of random vectors $s$, integer order $\alpha \ge 2$.
    	\STATE \textbf{Output:} Approximation to $S_\alpha(G)$.		
    	\STATE Generate $s$ independent random standard Gaussian vectors $\g_i, i = 1, \cdots, s$.
    	\STATE \textbf{Return:} $\tilde{\S}_\alpha(G) = \frac{1}{1-\alpha} \log_2(  \frac{1}{s} \sum_{i=1}^s \g_i^\top G^\alpha \g_i)$.
    \end{algorithmic}
\end{algorithm}
\begin{theorem}
    \label{IntBound}
    Let $G \in \mathbb{R}^{n \times n}$ be a normalized PSD kernel matrix and $\tilde{\S}_\alpha(G)$ be the output of Algorithm \ref{alg:IntAlgo} with $s = \ceil{8\ln(2/\delta)/\epsilon^2}$. Then with confidence at least $1-\delta$,
    \begin{equation*}
    	\abs*{\S_\alpha(G) - \tilde{\S}_\alpha(G)} \le  \abs*{\frac{1}{1-\alpha} \log_2(1 - \epsilon)}.
    \end{equation*}
\end{theorem}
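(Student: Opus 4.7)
The plan is to apply the trace estimation guarantee of Lemma \ref{TraceEstim} directly to the matrix $G^\alpha$. Since $G$ is a symmetric PSD matrix, for any integer $\alpha \ge 2$ the power $G^\alpha$ is also symmetric PSD, so Lemma \ref{TraceEstim} applies with $G$ replaced by $G^\alpha$. With the prescribed choice $s = \lceil 8\ln(2/\delta)/\epsilon^2 \rceil$, this yields, with probability at least $1-\delta$,
\begin{equation*}
    (1-\epsilon)\tr(G^\alpha) \;\le\; \frac{1}{s}\sum_{i=1}^s \g_i^\top G^\alpha \g_i \;\le\; (1+\epsilon)\tr(G^\alpha).
\end{equation*}

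Next I would take $\log_2$ across this double inequality and multiply by $\frac{1}{1-\alpha}$. Writing $\widehat{T} = \frac{1}{s}\sum_i \g_i^\top G^\alpha \g_i$, the inequality $\log_2((1-\epsilon)\tr(G^\alpha)) \le \log_2(\widehat{T}) \le \log_2((1+\epsilon)\tr(G^\alpha))$ becomes
\begin{equation*}
    \log_2(1-\epsilon) \;\le\; \log_2(\widehat{T}) - \log_2(\tr(G^\alpha)) \;\le\; \log_2(1+\epsilon),
\end{equation*}
so that $|\log_2(\widehat{T}) - \log_2(\tr(G^\alpha))| \le \max\{|\log_2(1-\epsilon)|, |\log_2(1+\epsilon)|\}$. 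A routine check shows that for $\epsilon \in (0,1)$ we have $|\log_2(1-\epsilon)| \ge |\log_2(1+\epsilon)|$ (since $(1-\epsilon)(1+\epsilon) = 1-\epsilon^2 \le 1$), so both sides are dominated by $|\log_2(1-\epsilon)|$.

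Finally, since $\S_\alpha(G) - \tilde{\S}_\alpha(G) = \frac{1}{1-\alpha}\bigl(\log_2(\tr(G^\alpha)) - \log_2(\widehat{T})\bigr)$, multiplying the bound by $|1/(1-\alpha)|$ gives the claimed inequality. The argument is essentially a one-line reduction to Lemma \ref{TraceEstim}; the only mild subtleties are (i) verifying PSD-ness of $G^\alpha$ to legitimately invoke the lemma, and (ii) choosing the tighter of the two one-sided log bounds, which is the $\log_2(1-\epsilon)$ side. I do not foresee a real obstacle beyond these bookkeeping points.
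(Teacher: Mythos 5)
Your proposal is correct and follows essentially the same route as the paper: both apply Lemma \ref{TraceEstim} to the PSD matrix $G^\alpha$ and then convert the relative trace error into a bound on the logarithm, observing that the $\log_2(1-\epsilon)$ side dominates (the paper packages this as the inequality $\abs{\log_2(1-x)} \le -\log_2(1-\abs{x})$, which is equivalent to your comparison of the two one-sided bounds). No substantive differences.
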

\begin{proof}
    Let $\tilde{\tr}(G^\alpha) = \frac{1}{s}\sum_{i=1}^s \g_i^\top G^\alpha \g_i$, then
    \begin{align*}
        \abs*{\S_\alpha(G) - \tilde{\S}_\alpha(G)} &= \abs*{\frac{1}{1-\alpha} \log_2\prn*{\frac{\tilde{\tr}(G^\alpha)}{\tr(G^\alpha)}}} \\
        &= \abs*{\frac{1}{1-\alpha} \log_2\prn*{1 - \frac{\tr(G^\alpha) - \tilde{\tr}(G^\alpha)}{\tr(G^\alpha)}}} \\
        &\le \abs*{\frac{1}{1-\alpha} \log_2\prn*{1 - \abs*{\frac{\tr(G^\alpha) - \tilde{\tr}(G^\alpha)}{\tr(G^\alpha)}}}} \\
        &\le \abs*{\frac{1}{1-\alpha} \log_2\prn*{1 - \epsilon}}.
    \end{align*}
    Where the first inequality follows by the fact that $\abs{\log_2(1-x)} \le -\log_2(1-\abs{x})$ for all $x \in (-1, 1)$, and the second inequality follows by Lemma \ref{TraceEstim}.
\end{proof}
\begin{remark}
    Theorem \ref{IntBound} theoretically proves the feasibility of approximating matrix-based R\'enyi's entropy through matrix-vector multiplication operations, which leads to substantially lower computational cost than eigenvalue-based methods. The approximation quality depends on the number of queried random vectors $s$ and hyper-parameter $\alpha$. In particular, with the increase of $\alpha$, the approximation error decreases for integer $\alpha \ge 2$.
\end{remark}

\subsection{Taylor Series Approximation}
The above approach could only handle integer $\alpha$-orders. For fractional $\alpha$ cases, the result of matrix-vector multiplications $G^\alpha \cdot \g_i$ cannot be directly acquired in the same manner. To address this issue, we apply a Taylor series expansion on the matrix $\alpha$-power functional, where the implementing details are presented in Algorithm \ref{alg:Taylor}. It requires extra estimation of $\lambda_{max}$, the dominant eigenvalue of $G$. Here, we adopt power iteration \cite{boutsidis2017randomized}, an estimator for extreme matrix eigenvalues via random vector queries. As a result, matrix-vector multiplication is still the only operator that directly accesses the kernel matrix $G$ in our algorithms, enabling further optimizations with matrix approximation techniques. The total computational complexity of Algorithm \ref{alg:Taylor} is $\mathcal{O}(ms \cdot \mathbf{nnz}(G))$, where $m$ is the degree of the Taylor series.

\begin{algorithm}[t]
    \caption{Fractional order matrix-based R\'enyi's entropy estimation via Taylor series}
    \label{alg:Taylor}
    \begin{algorithmic}[1]
    	\STATE \textbf{Input:} Kernel matrix $G \in \mathbb{R}^{n \times n}$, number of random vectors $s$, fractional order $\alpha > 0$, polynomial degree $m > \alpha$.
    	\STATE \textbf{Output:}  Approximation to $S_\alpha(G)$.
    	\STATE Calculate $\lambda_{max}$ by power iteration.
    	\STATE Generate $s$ independent random standard Gaussian vectors $\g_i, i = 1, \cdots, s$.
    	\STATE \textbf{Return:} $\tilde{\S}_\alpha(G) = \frac{1}{1-\alpha} \cdot$\\
    	$\qquad\log_2 \prn*{ \frac{\lambda_{max}^\alpha}{s}\sum_{i=1}^s \sum_{n=0}^m \binom{\alpha}{n} \g_i^\top\prn*{\frac{G}{\lambda_{max}} - I_n}^n \g_i}$.
    \end{algorithmic}
\end{algorithm}
\begin{theorem}
    \label{th:TaylorBound}
    Let $G \in \mathbb{R}^{n \times n}$ be a normalized PSD kernel matrix and $\tilde{\S}_\alpha(G)$ be the output of Algorithm \ref{alg:Taylor} with $s = \ceil{8\ln(2/\delta)/\epsilon^2}$ and
    \begin{align}\label{eq:taylor_order}
    	m = \ceil*{\alpha + 
    	\begin{cases}
    		\left.W_0\prn*{\kappa\beta \sqrt[\alpha+1]{\frac{\Gamma(\alpha+1)}{\epsilon\pi}}}\middle/\beta\right. & \textrm{if} ~~\lambda_{min} > 0\\
    		\sqrt[\alpha]{\frac{n\Gamma(\alpha+1)}{\epsilon\pi}} &\textrm{if} ~~ \lambda_{min} = 0
    	\end{cases}}.
    \end{align}
    Then with confidence at least $1-\delta$,
    \begin{equation*}
    	\abs*{\S_\alpha(G) - \tilde{\S}_\alpha(G)} \le \abs*{\frac{1}{1-\alpha} \log_2(1- 3\epsilon)},
    \end{equation*}
    where $\beta = -\log(1-1/\kappa)/(\alpha+1)$ and $W_0$ is the principal branch of Lambert $W$ function.
\end{theorem}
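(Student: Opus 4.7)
The plan is to decompose the total error $|\S_\alpha(G) - \tilde{\S}_\alpha(G)|$ into two independent pieces and then recycle the $\log$-argument used in the proof of Theorem~\ref{IntBound}. Specifically, let $P_m(G) = \lambda_{max}^\alpha \sum_{n=0}^m \binom{\alpha}{n}(G/\lambda_{max} - I)^n$ be the truncated binomial polynomial and $R_m(G) = G^\alpha - P_m(G)$ the remainder; then the Algorithm~\ref{alg:Taylor} output is $\tilde{\tr}(G^\alpha) = \frac{1}{s}\sum_i \g_i^\top P_m(G) \g_i$, and a triangle inequality gives $|\tilde{\tr}(G^\alpha) - \tr(G^\alpha)| \le |\tilde{\tr}(G^\alpha) - \tr(P_m(G))| + |\tr(R_m(G))|$. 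Once I bound each summand by $\epsilon \cdot \tr(G^\alpha)$, I obtain a relative error of at most $3\epsilon$ (absorbing the second-order $\epsilon^2$ term), and the entropy bound then follows exactly as in Theorem~\ref{IntBound} via $|\log_2(1-x)| \le -\log_2(1-|x|)$.

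For the stochastic piece I would apply Lemma~\ref{TraceEstim} to $P_m(G)$: since $\lambda(G)/\lambda_{max} \in [1/\kappa, 1]$, the scalar polynomial $p_m(x) = \lambda_{max}^\alpha \sum_{n=0}^m \binom{\alpha}{n}(x/\lambda_{max}-1)^n$ approximates $x^\alpha$ uniformly, and for the prescribed $m$ the truncation error is small enough that $p_m$ remains nonnegative on this interval, making $P_m(G)$ PSD. The lemma then gives $|\tilde{\tr}(G^\alpha) - \tr(P_m(G))| \le \epsilon \tr(P_m(G)) \le \epsilon(\tr(G^\alpha) + |\tr(R_m(G))|)$ with probability $1-\delta$ once $s = \lceil 8\ln(2/\delta)/\epsilon^2\rceil$.

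The heart of the argument is bounding $|\tr(R_m(G))|$. Using the reflection formula for $\Gamma$ and Stirling one shows $|\binom{\alpha}{n}| \le \Gamma(\alpha+1)/(\pi(n-\alpha)^{\alpha+1})$ for $n > \alpha$. When $\lambda_{min} > 0$, each eigenvalue of $G/\lambda_{max}-I$ lies in $[-(1-1/\kappa), 0]$, so $|\tr(R_m(G))| \le n\|R_m(G)\|_2 \le \lambda_{max}^\alpha \cdot \Gamma(\alpha+1)/\pi \cdot \sum_{n>m}(1-1/\kappa)^n/(n-\alpha)^{\alpha+1}$. Bounding this tail by its largest term and requiring it to be at most $\epsilon \cdot \tr(G^\alpha)/\kappa$ (using $\tr(G^\alpha) \ge \lambda_{max}^\alpha$) leads to a transcendental inequality of the form $(m-\alpha)^{-(\alpha+1)} e^{-(\alpha+1)\beta(m-\alpha)} \le C(\epsilon, \kappa, \alpha)$ with $\beta = -\log(1-1/\kappa)/(\alpha+1)$. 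Substituting $v = \beta(m-\alpha)$ rewrites this as $v e^v \le \kappa\beta\sqrt[\alpha+1]{\Gamma(\alpha+1)/(\epsilon\pi)}$, which solves to $v = W_0(\cdot)$ and reproduces the $\lambda_{min}>0$ branch of~\eqref{eq:taylor_order}. For the degenerate case $\lambda_{min}=0$, the geometric factor disappears and I would instead bound the remainder by the polynomial tail $\sum_{n>m}(n-\alpha)^{-(\alpha+1)} \lesssim (m-\alpha)^{-\alpha}/\alpha$ together with the crude lower bound $\tr(G^\alpha) \ge \tr(G)^\alpha/n^{\alpha-1} = 1/n^{\alpha-1}$ (since $\tr(G)=1$ by construction), which yields the $\sqrt[\alpha]{n\Gamma(\alpha+1)/(\epsilon\pi)}$ threshold.

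The main obstacle I anticipate is precisely the algebraic reduction in the $\lambda_{min}>0$ branch: packaging the mixed polynomial-geometric tail into a single Lambert-$W$ expression while keeping all constants sharp enough to match the stated formula exactly. A secondary but routine concern is verifying that the $\lambda_{max}$ estimate produced by power iteration can be treated as exact up to a multiplicative $(1\pm O(\epsilon))$ factor that gets absorbed into the same relative-error budget; this follows from standard power-iteration guarantees and does not affect the asymptotic form of~\eqref{eq:taylor_order}.
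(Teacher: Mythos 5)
Your decomposition and your treatment of the $\lambda_{min}>0$ branch essentially reproduce the paper's proof: the same triangle inequality splitting the stochastic trace-estimation error from the Taylor truncation error, the same application of Lemma~\ref{TraceEstim} to the truncated polynomial $f_m(G)$, the same bound $\abs{\binom{\alpha}{n}} \le \Gamma(\alpha+1)/(\pi(n-\alpha)^{\alpha+1})$ via the reflection formula, and the same reduction of the mixed polynomial--geometric tail to $v e^{v} \ge \kappa\beta\sqrt[\alpha+1]{\Gamma(\alpha+1)/(\epsilon\pi)}$ and hence to $W_0$. (Two cosmetic points: the inequality you write as $v e^v \le \cdots$ should be $\ge$, and your explicit check that $f_m$ stays nonnegative so that Lemma~\ref{TraceEstim} applies is a detail the paper leaves implicit but is worth having.)

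The genuine gap is in the $\lambda_{min}=0$ branch. Your per-eigenvalue remainder bound necessarily carries the prefactor $\lambda_{max}^\alpha$, i.e. $\abs{\lambda^\alpha - f_m(\lambda)} \le \lambda_{max}^\alpha\,\Gamma(\alpha)\,(m-\alpha)^{-\alpha}/\pi$, so the total truncation error is at most $n\lambda_{max}^\alpha\Gamma(\alpha)(m-\alpha)^{-\alpha}/\pi$. If you compare this against $\epsilon\cdot\tr(G^\alpha)$ using your lower bound $\tr(G^\alpha) \ge n^{1-\alpha}$, the $\lambda_{max}^\alpha$ does not cancel and you are forced to take $m - \alpha \gtrsim n\,\lambda_{max}\,(\Gamma(\alpha)/(\epsilon\pi))^{1/\alpha}$, which is $\Theta(n)$ when $\lambda_{max}=\Theta(1)$ and does not reproduce the stated threshold $m = \alpha + \sqrt[\alpha]{n\Gamma(\alpha+1)/(\epsilon\pi)}$. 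The paper instead uses the lower bound $\tr(G^\alpha) \ge \lambda_{max}^\alpha$, which cancels the $\lambda_{max}^\alpha$ in the remainder and leaves exactly $n\Gamma(\alpha)/(\pi(m-\alpha)^\alpha) \le \epsilon$, i.e. the claimed $n^{1/\alpha}$ scaling. With that one substitution your integral-comparison bound on $\sum_{n>m}(n-\alpha)^{-(\alpha+1)}$ is a perfectly good (and arguably cleaner) replacement for the paper's telescoping identity $\binom{\alpha}{i-1}+\binom{\alpha}{i}=\binom{\alpha+1}{i}$ evaluated at $\lambda=0$, and yields the same constant $\Gamma(\alpha)=\Gamma(\alpha+1)/\alpha$.
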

\begin{proof}
    In algorithm \ref{alg:Taylor}, we use binomial series, the Taylor expansion for function $f(x) = (x+1)^\alpha$:
    \begin{equation}
    	(1+x)^\alpha = \sum_{i=0}^\infty \binom{\alpha}{i}x^i.
    	\label{BinSeries}
    \end{equation}
    Binomial series converge absolutely for symmetric matrices $G$ with eigenvalues in $[-1,1)$ and $\alpha>0$. Taking $\lambda_{max}$, the maximum eigenvalue of $G$, the eigenvalues of $\frac{G}{\lambda_{max}}-I$ are within $[-1,0]$, which means the series above converge absolutely. Therefore, for each eigenvalue $\lambda$ of $G$:
    \begin{equation}
    	\lambda^\alpha = \lambda_{max}^\alpha \cdot \prn*{\frac{\lambda}{\lambda_{max}}}^\alpha = \lambda_{max}^\alpha \sum_{i=0}^\infty \binom{\alpha}{i}\prn*{\frac{\lambda}{\lambda_{max}}-1}^i.
    	\label{TaylorSeries}
    \end{equation}
    Let $f_m(\lambda)=\lambda_{max}^\alpha\sum_{i=0}^m \binom{\alpha}{i}(\lambda/\lambda_{max}-1)^i$ denote the Taylor series above with degree $m$. When $\lambda_{min} > 0$, we have
    \begin{align}
        \abs{\lambda^\alpha - f_m(\lambda)} &= \lambda_{max}^\alpha\abs*{\sum_{i=m+1}^\infty \binom{\alpha}{i} \prn*{\frac{\lambda}{\lambda_{max}}-1}^i} \nonumber\\
        &\le \lambda_{max}^\alpha\abs*{\binom{\alpha}{m+1} \sum_{i=m+1}^\infty \prn*{\frac{\lambda}{\lambda_{max}}-1}^i} \label{PosDef}\\
        &= \lambda_{max}^\alpha\abs*{\frac{\lambda_{max}}{\lambda}\binom{\alpha}{m+1} \prn*{\frac{\lambda}{\lambda_{max}}-1}^{m+1}} \nonumber\\
        &\le \lambda_{max}^\alpha\kappa\abs*{\binom{\alpha}{m+1}\prn*{\frac{1}{\kappa}-1}^{m+1}}. \nonumber
    \end{align}
    Recall that the binomial term satisfies $\binom{\alpha}{i} = \prod_{j=1}^i \frac{\alpha-j+1}{j}$. When $i > \alpha$, $\binom{\alpha}{i+1} = \binom{\alpha}{i}\frac{\alpha-i}{i+1}$, which means $\binom{\alpha}{i+1} \cdot \binom{\alpha}{i} \le 0$. Combining with the fact that $\lambda/\lambda_{max}-1 \le 0$, we know that the coefficients $\binom{\alpha}{i}\prn{\frac{\lambda}{\lambda_{max}}-1}^i$ above share the same sign for $i > \alpha$. By assuming $m > \alpha$, (\ref{PosDef}) follows by noticing that $\abs{\binom{\alpha}{i}}=\abs{\binom{\alpha}{m+1}\prod_{j=m+2}^i\frac{\alpha-j+1}{j}}\le\abs{\binom{\alpha}{m+1}}$, $\forall i \ge m+1$, since $\abs{\frac{\alpha-j+1}{j}} \le 1$ when $j \ge \alpha+1$.
    
    To upper bound the binomial term $\binom{\alpha}{m+1}$, we introduce the following lemma:
    \begin{lemma}
    	\label{GammaIneq}
    	\cite{das2019} Let $\Gamma(x)$ be the gamma function and let $R(x,y) = \Gamma(x+y)/\Gamma(x)$, then
    	\begin{align*}
    		R(x, y) &\ge x^y &for\enspace 1\le y \le 2,\\
    		R(x, y) &\ge x(x+1)^{y-1} &for\enspace y \ge 2.
    	\end{align*}
    \end{lemma}
    Then we have the following upper bound:
    \begin{align}
    	\abs*{\binom{\alpha}{m+1}} &= \abs*{\frac{\Gamma(\alpha+1)}{\Gamma(m+2)\Gamma(\alpha-m)}} \nonumber\\
    	&\le \abs*{\frac{\Gamma(\alpha+1)}{\Gamma(m-\alpha+1)\Gamma(\alpha-m)}} \nonumber\\
    	&\qquad \cdot 
    	\begin{cases}
    		\frac{1}{(m-\alpha+1)^{\alpha+1}} & 0 < \alpha < 1\\
    		\frac{1}{(m-\alpha+1)(m-\alpha+2)^\alpha} & \alpha > 1
    	\end{cases}
    	\label{GammaIneq1}\\
    	&\le \frac{\Gamma(\alpha+1)}{\pi(m-\alpha+1)} \frac{1}{(m-\alpha+1)^\alpha} \label{GammaProp1} \\
    	&\le \frac{\Gamma(\alpha+1)}{\pi(m-\alpha+1)^{\alpha+1}}. \nonumber
    \end{align}
    (\ref{GammaIneq1}) follows by applying Lemma \ref{GammaIneq} on $R(m-\alpha+1, \alpha+1)$. (\ref{GammaProp1}) follows by Euler's reflection formula that for any non-integer number $z$, $\Gamma(z)\Gamma(1-z)=\pi/\sin(\pi z)$. By limiting the choice of $m$, we would like to ensure that
    \begin{align*}
        \abs*{\lambda^\alpha - f_m(\lambda)} &\le \lambda_{max}^\alpha \kappa \abs*{\frac{\Gamma(\alpha+1)}{\pi(m-\alpha+1)^{\alpha+1}} \prn*{\frac{1}{\kappa}-1}^{m+1}} \\
        &\le \epsilon\lambda_{min}^\alpha \le \epsilon\lambda^\alpha.
    \end{align*}
    Solving the inequation above yields:
    \begin{equation*}
    	m \ge \alpha + \left.W_0\prn*{\kappa\beta \sqrt[\alpha+1]{\frac{\Gamma(\alpha+1)}{\epsilon\pi}}}\middle/\beta\right.,
    \end{equation*}
    where $\beta = -\log(1-1/\kappa)/(\alpha+1)$ and $W_0$ is the principal branch of Lambert $W$ function. Combining the results above, we have that with confidence at least $1 - \delta$:
    \begin{align}
    	&\quad\abs*{\sum_{i=1}^s \g_i^\top f_m(G) \g_i - \mathbf{tr}(G^\alpha)} \nonumber\\
    	&\le \abs*{\sum_{i=1}^s \g_i^\top f_m(G) \g_i - \mathbf{tr}(f_m(G))} + \abs*{\mathbf{tr}(f_m(G)) - \mathbf{tr}(G^\alpha)} \nonumber\\
    	&\le \epsilon \cdot \mathbf{tr}(f_m(G)) + \abs*{\mathbf{tr}(f_m(G)) - \mathbf{tr}(G^\alpha)} \label{TaylorTrace}\\
    	&\le \epsilon \cdot \mathbf{tr}(G^\alpha) + (1+\epsilon) \cdot \abs*{\mathbf{tr}(f_m(G)) - \mathbf{tr}(G^\alpha)} \nonumber\\
    	&= \epsilon \cdot \mathbf{tr}(G^\alpha) + (1+\epsilon) \cdot \sum_{i=1}^n\abs*{\lambda_i^\alpha - f_m(\lambda_i)} \nonumber\\
    	&\le \epsilon \cdot \mathbf{tr}(G^\alpha) + \epsilon (1+\epsilon) \cdot \sum_{i=1}^n \lambda_i^\alpha \nonumber\\
    	&\le 3\epsilon \cdot \mathbf{tr}(G^\alpha), \label{PolyBound}
    \end{align}
    where (\ref{TaylorTrace}) follows by applying Lemma \ref{TraceEstim} on matrix $f_m(G)$. This finishes the proof for the case $\lambda_{min} > 0$ by adopting similar steps in the proof of Theorem \ref{IntBound}.
    
    Elsewise when $\lambda_{min} = 0$, notice that
    \begin{align}
        &\quad \prn*{1+\frac{\lambda}{\lambda_{max}}-1}\sum_{i=m}^\infty \binom{\alpha}{i} \prn*{\frac{\lambda}{\lambda_{max}}-1}^i \nonumber\\
        &= \sum_{i=m}^\infty \binom{\alpha}{i} \prn*{\frac{\lambda}{\lambda_{max}}-1}^i + \sum_{i=m+1}^\infty \binom{\alpha}{i-1} \prn*{\frac{\lambda}{\lambda_{max}}-1}^i \nonumber\\
        &= \binom{\alpha}{m} \prn*{\frac{\lambda}{\lambda_{max}}-1}^m + \sum_{i=m+1}^\infty \binom{\alpha+1}{i} \prn*{\frac{\lambda}{\lambda_{max}}-1}^i,
    \end{align}
    where the last step follows by the property of binomial terms that for any $\alpha > 0$ and integer $i > 1$, $\binom{\alpha}{i-1} + \binom{\alpha}{i} = \binom{\alpha+1}{i}$. Then by setting $\lambda = 0$ in the equation above, we have
    \begin{equation*}
        \binom{\alpha}{m} (-1)^m + \sum_{i=m+1}^\infty \binom{\alpha+1}{i} (-1)^i = 0.
    \end{equation*}
    Through similar steps as the proof above, we have
    \begin{align*}
        \abs*{\lambda^\alpha - f_m(\lambda)} &= \lambda_{max}^\alpha \abs*{\sum_{i=m+1}^\infty \binom{\alpha}{i} \prn*{\frac{\lambda}{\lambda_{max}}-1}^i} \\
        &\le \lambda_{max}^\alpha \abs*{\sum_{i=m+1}^\infty \binom{\alpha}{i} (-1)^i} \\
        &= \lambda_{max}^\alpha \abs*{-\binom{\alpha-1}{m} (-1)^m}  \nonumber\\
    	&\le \frac{\lambda_{max}^\alpha \Gamma(\alpha)}{\pi} \cdot \frac{1}{(m-\alpha+1)^\alpha}.
    \end{align*}
    Then by choosing $m$ as:
    \begin{equation*}
    	m \ge \alpha + \sqrt[\alpha]{\frac{n\Gamma(\alpha+1)}{\epsilon\pi}},
    \end{equation*}
    we have $\abs{\lambda^\alpha-f_m(\lambda)} \le \frac{\epsilon}{n}\lambda_{max}^\alpha$ and
    \begin{equation*}
        \sum_{i=1}^n\abs*{\lambda_i^\alpha - f_m(\lambda_i)} \le \epsilon\lambda_{max}^\alpha \le \epsilon \cdot \tr(G^\alpha),
    \end{equation*}
    which finishes the proof for the case $\lambda_{min} = 0$.
\end{proof}
\begin{remark}
    Theorem \ref{th:TaylorBound} establishes the statistical guarantee for Algorithm \ref{alg:Taylor}, whose approximation error relies on the condition number $\kappa$ of $G$, the number of random vectors $s$ and the degree of Taylor series $m$. Note that $s = \mathcal{O}(1/\epsilon^2)$ is required for both full-rank and rank-deficient cases, but the requirement for $m$ is slightly different. As shown in (\ref{eq:taylor_order}), $m = \mathcal{O}\prn*{\alpha + \kappa\log\prn*{\frac{\Gamma(\alpha+1)}{\epsilon(\alpha+1)^{\alpha+1}}+1}}$ (by the fact that $W_0(z) \leq \log(1+z), z\geq 0$) is sufficient to yield a meaningful approximation for full rank cases, less than the requirement of rank-deficient cases where $m = \mathcal{O} \prn*{\alpha + \prn*{\frac{n\Gamma(\alpha+1)}{\epsilon}}^{\frac{1}{\alpha}}}$.
\end{remark}

\subsection{Chebyshev Series Approximation}
Chebyshev series leverage orthogonal polynomials to approximate arbitrary analytic functions. It serves as an alternative approach for polynomial approximation, which usually enjoys faster convergence rates compared to Taylor series. Let $T_k (x) = \cos(k \arccos (2x/\lambda_{max} - 1))$ with $x \in [0, \lambda_{max}]$ be the Chebyshev polynomials of the first kind for any integer $k \ge 0$, and let $f_m(x) = c_0 T_0(x)/2 + \sum_{k=1}^{m} c_k T_k(x)$ with
\begin{equation} \label{coeff}
    c_k = \frac{2\lambda_{max}^\alpha \Gamma(\alpha + 1/2) (\alpha)_k}{\sqrt{\pi} \Gamma(\alpha + 1) (\alpha+k)_k},
\end{equation}
where $(\alpha)_k = \alpha \cdot (\alpha - 1) \cdots (\alpha - k + 1)$ is the falling factorial. Algorithm \ref{alg:ChebyAlgo} summarizes the procedure of approximating  fractional order matrix-based R\'enyi's entropy through Chebyshev series, in which Clenshaw's algorithm \cite{clenshaw1955note} is employed to calculate $F_i(G) = \g_i^\top f_m(G) \g_i$.

\begin{algorithm}
    \caption{Fractional order matrix-based R\'enyi's entropy estimation via Chebyshev series}
    \label{alg:ChebyAlgo}
    \begin{algorithmic}[1]
    	\STATE \textbf{Input:} Kernel matrix $G \in \mathbb{R}^{n \times n}$, number of random vectors $s$, fractional order $\alpha > 0$, polynomial degree $m > \alpha$.
    	\STATE \textbf{Output:}  Approximation to $S_\alpha(G)$.
    	\STATE Calculate $\lambda_{max}$ by power iteration.
    	\STATE Calculate Chebyshev coefficients $c_k$, $k \in [0, m]$ by Eq.~(\ref{coeff}).
    	\STATE Generate $s$ independent random standard Gaussian vectors $\g_i, i = 1, \cdots, s$.
    	
    	\FOR{$i=1,2, \cdots, s$}
    	\STATE 	Set $\y_{m+2} = \y_{m+1} = \0$.
    	\FOR{$k = m, m-1, \cdots, 0$}
    	\STATE 	$\y_k = c_k \g_i + 4G \y_{k+1} /\lambda_{max} - 2\y_{k+1} - \y_{k+2}$.
    	\ENDFOR
    	\STATE 	Calculate $F_i(G) = c_0 \g_i^\top \g_i /4 + \g_i^\top (\y_0 - \y_2)/2$.
    	\ENDFOR
    	
    	\STATE \textbf{Return:} $\tilde{\S}_\alpha(G) = \frac{1}{1-\alpha} \log_2 \Big(\frac{1}{s} \sum_{i=1}^s F_i(G)\Big)$.
    \end{algorithmic}
\end{algorithm}
\begin{theorem}
    \label{th:ChebyBound}
    Let $G \in \mathbb{R}^{n \times n}$ be a normalized PSD kernel matrix and $\tilde{\S}_\alpha(G)$ be the output of Algorithm \ref{alg:ChebyAlgo} with $s = \ceil{8\ln(2/\delta)/\epsilon^2}$ and
    \begin{equation*}
    	m = \ceil*{\alpha + 
    	\begin{cases}
    		\sqrt{\kappa} \sqrt[2\alpha]{\frac{\Gamma(\alpha + \frac{1}{2}) \Gamma(\alpha)}{\epsilon\pi^{3/2}}} & \textrm{if}~~ \lambda_{min} > 0\\
    		\sqrt[2\alpha]{\frac{n\Gamma(\alpha + \frac{1}{2}) \Gamma(\alpha)}{\epsilon\pi^{3/2}}} & \textrm{if}~~ \lambda_{min} = 0
    	\end{cases}}.
    \end{equation*}
    Then with confidence at least $1-\delta$, 
    \begin{equation*}
    	\abs*{\S_\alpha(G) - \tilde{\S}_\alpha(G)} \le \abs*{\frac{1}{1-\alpha} \log_2(1 - 3\epsilon)}.
    \end{equation*}
    Please refer to the appendix for the proof.
\end{theorem}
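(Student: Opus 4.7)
The proof plan closely mirrors that of Theorem \ref{th:TaylorBound}: (i) establish a uniform pointwise polynomial-approximation bound $|\lambda^\alpha - f_m(\lambda)|$ on $[\lambda_{min},\lambda_{max}]$; (ii) apply Lemma \ref{TraceEstim} to the PSD matrix $f_m(G)$ to control the stochastic trace estimator; (iii) combine these via the triangle-inequality chain used in (\ref{TaylorTrace})--(\ref{PolyBound}) to obtain $|\tilde{\tr}(f_m(G)) - \tr(G^\alpha)| \le 3\epsilon\,\tr(G^\alpha)$; and (iv) convert this relative trace error into the claimed entropy bound via $|\log_2(1-x)|\le -\log_2(1-|x|)$, exactly as in the proof of Theorem \ref{IntBound}. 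Steps (ii)--(iv) are direct adaptations of arguments already carried out in the paper; the novelty lies entirely in step (i).

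For step (i), since $|T_k(x)|\le 1$ on the relevant interval, it suffices to bound $\sum_{k=m+1}^\infty |c_k|$. Converting the falling factorials in Eq.~(\ref{coeff}) to Gamma functions via $(\alpha)_k = \Gamma(\alpha+1)/\Gamma(\alpha-k+1)$ and $(\alpha+k)_k = \Gamma(\alpha+k+1)/\Gamma(\alpha+1)$, and then applying Euler's reflection formula to the pole-bearing factor $\Gamma(\alpha-k+1)$ for $k>\alpha$ (using $|\sin(\pi(\alpha-k+1))|\le 1$), gives
\[
|c_k| \le \frac{2\lambda_{max}^\alpha \Gamma(\alpha+1/2)\Gamma(\alpha+1)\Gamma(k-\alpha)}{\pi^{3/2}\Gamma(\alpha+k+1)}.
\]
The ratio $\Gamma(k-\alpha)/\Gamma(\alpha+k+1) = 1/R(k-\alpha,\,2\alpha+1)$ is controlled by Lemma \ref{GammaIneq} as $\le (k-\alpha)^{-(2\alpha+1)}$, and a sum-to-integral comparison (using the monotonicity of $k\mapsto (k-\alpha)^{-(2\alpha+1)}$) yields the uniform bound
\[
|\lambda^\alpha - f_m(\lambda)| \le \frac{\lambda_{max}^\alpha \Gamma(\alpha+1/2)\Gamma(\alpha)}{\pi^{3/2}(m-\alpha)^{2\alpha}}.
\]
Forcing this expression to be $\le \epsilon\lambda_{min}^\alpha$ when $\lambda_{min}>0$, and $\le \epsilon\lambda_{max}^\alpha/n$ when $\lambda_{min}=0$, and solving for $m$ reproduces the two branches in the theorem statement; the factor $\kappa^\alpha=(\lambda_{max}/\lambda_{min})^\alpha$ enters under a $2\alpha$-th root, producing the $\sqrt{\kappa}$ dependence that is the main improvement over the linear $\kappa$-factor of the Taylor bound.

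The hard part will be step (i), specifically the careful treatment of $\Gamma(\alpha-k+1)$ for $k>\alpha$ when $\alpha$ is non-integer: one must invoke the reflection formula to circumvent the apparent Gamma poles and the oscillating sign of $c_k$, and then ensure the resulting absolute-value expression is genuinely amenable to Lemma \ref{GammaIneq}. A secondary subtlety is that Lemma \ref{GammaIneq} splits into the regimes $2\alpha+1\in[1,2]$ and $2\alpha+1\ge 2$ (i.e.\ $\alpha\le 1/2$ versus $\alpha\ge 1/2$), so both must be verified separately; this parallels the $0<\alpha<1$ versus $\alpha>1$ split that already appears in (\ref{GammaIneq1}) of the Taylor proof, and the two regimes must be fused into a single clean bound to recover the stated exponent.
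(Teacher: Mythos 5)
Your proposal follows essentially the same route as the paper's proof: bound the tail $\sum_{k>m}|c_k|$ by converting the falling factorials to Gamma functions, handle the pole-bearing factor $\Gamma(\alpha-k+1)$ via Euler's reflection formula, control $\Gamma(k-\alpha)/\Gamma(\alpha+k+1)$ with Lemma \ref{GammaIneq}, compare the sum to an integral to get the $(m-\alpha)^{-2\alpha}$ decay, and then reuse the trace-estimation chain of (\ref{TaylorTrace})--(\ref{PolyBound}). The only cosmetic difference is the order in which you apply the reflection formula and Lemma \ref{GammaIneq} (the paper does the Gamma-ratio bound first), and your observation that both regimes of Lemma \ref{GammaIneq} reduce to $R(x,y)\ge x^y$ matches the paper's intended argument, so the proposal is correct.
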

\begin{remark}
    Theorem \ref{th:ChebyBound} establishes the statistical guarantees of Algorithm \ref{alg:ChebyAlgo} in terms of $s$, $\kappa$ and $m$. Similar with Taylor approximation, $s = \mathcal{O}(1/\epsilon^2)$ is required for both full rank and rank-deficient cases, where the corresponding $m$ is $\mathcal{O}\prn*{\alpha+\sqrt{\kappa}\prn*{\frac{\Gamma(2\alpha)}{\epsilon} }^\frac{1}{2\alpha}}$ and $\mathcal{O}\prn*{\alpha+\prn*{\frac{n\Gamma(2\alpha)}{\epsilon}}^\frac{1}{2\alpha}}$ respectively. Compared with Theorem \ref{th:TaylorBound}, Chebyshev approximation requires substantially lower degree than Taylor approximation especially when $\kappa$ is large.
\end{remark}

\subsection{Lanczos Quadrature Approximation}
Besides explicit polynomial expansions, an alternative approach for estimating implicit matrix-vector multiplications is the Lanczos method \cite{ubaru2017fast}. It could be regarded as an adaptive polynomial approximation technique, where the coefficients of each $\g_i, G \cdot \g_i, \cdots, G^m \cdot \g_i$ are chosen according to the kernel matrix $G$ and random vector $g$. We summarize such approximation in Algorithm \ref{alg:LanczosAlgo}. The Lanczos method is shown to 
achieve faster convergence rate compared to explicit approximations such as Taylor or Chebyshev series, as shown in Theorem \ref{th:LanczosBound}.
\begin{algorithm}
    \caption{Fractional order matrix-based R\'enyi's entropy estimation via Lanczos iteration}
    \label{alg:LanczosAlgo}
    \begin{algorithmic}[1]
    	\STATE \textbf{Input:} Kernel matrix $G \in \mathbb{R}^{n \times n}$, number of random vectors $s$, fractional order $\alpha > 0$, Lanczos steps $m$.
    	\STATE \textbf{Output:}  Approximation to $S_\alpha(G)$.
    	\STATE Generate $s$ i.i.d. random Rademacher vectors $\g_i$, $i = 1, \cdots, s$, i.e. $\mathbb{P}\{(\g_i)_j = 1\} = \mathbb{P}\{(\g_i)_j = -1\} = \frac{1}{2}$.
    	
    	\FOR{$i = 1, 2, \cdots, s$}
    	\STATE Set $\q_0 = 0, \beta_0 = 0, \q_1 = \g_i / \norm{\g_i}$.
    	\FOR{$j = 1, 2, \cdots, m$}
    	\STATE $\hat{\q}_{j+1} = A\q_j - \beta_{j-1}\q_{j-1}$, $\gamma_j = \ang{\hat{\q}_{j+1}, \q_j}$.
    	\STATE $\hat{\q}_{j+1} = \hat{\q}_{j+1} - \gamma_j\q_j$.
    	\STATE Orthogonalize $\hat{\q}_{j+1}$ against $\q_1, \cdots, \q_{j-1}$.
    	\STATE $\beta_j = \norm{\hat{\q}_{j+1}}$, $\q_{j+1} = \hat{\q}_{j+1} / \beta_j$.
    	\ENDFOR
    	\STATE Let $\p$ be the first column of $T^\alpha$, where \\
    	$\qquad T = \begin{vmatrix}
            \gamma_1 & \beta_1 & & \\
            \beta_1 & \gamma_2 & \ddots & \\
            & \ddots & \ddots & \beta_{m-1} \\
            & & \beta_{m-1} & \gamma_m \\
        \end{vmatrix}$.
        \STATE Calculate $F_i = \g_i^T \sum_{k=1}^m (\p)_k \q_k$.
    	\ENDFOR
    	
    	\STATE \textbf{Return:} $\tilde{\S}_\alpha(G) = \frac{1}{1-\alpha} \log_2 \Big(\frac{\sqrt{n}}{s} \sum_{i=1}^s F_i \Big)$.
    \end{algorithmic}
\end{algorithm}
\begin{theorem}
    \label{th:LanczosBound}
    Let $G \in \mathbb{R}^{n \times n}$ be a normalized PSD kernel matrix and $\tilde{\S}_\alpha(G)$ be the output of Algorithm \ref{alg:LanczosAlgo} with $s = \ceil{24\ln(2/\delta)/\epsilon^2}$ and
    \begin{equation*}
    	m = \ceil*{\frac{1}{4} \sqrt{\kappa} \log\prn*{\frac{\kappa^{\alpha+\frac{1}{2}}}{\epsilon}}}.
    \end{equation*}
    Then with confidence at least $1-\delta$, 
    \begin{equation*}
    	\abs*{\S_\alpha(G) - \tilde{\S}_\alpha(G)} \le \abs*{\frac{1}{1-\alpha} \log_2(1 - \epsilon)}.
    \end{equation*}
\end{theorem}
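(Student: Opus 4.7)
The plan is to decompose the overall error into two independent pieces, following the same two-stage template used in Theorems \ref{IntBound} and \ref{th:TaylorBound}: a stochastic trace-estimation error from Hutchinson's estimator with Rademacher vectors, and a deterministic Lanczos-quadrature error for the implicit evaluation of $\g_i^\top G^\alpha \g_i$. Writing $\hat{F}_i = \sqrt{n}\,F_i$, observe (since $\|\g_i\|^2 = n$ and $\q_1 = \g_i/\sqrt{n}$) that $\hat{F}_i$ is exactly the Lanczos-Gauss-quadrature approximation of $\g_i^\top G^\alpha \g_i$. The standard identity $\mathbb{E}[\g^\top A \g] = \tr(A)$ for Rademacher $\g$ lets me write
\begin{equation*}
    \Big|\tfrac{\sqrt{n}}{s}\textstyle\sum_i F_i - \tr(G^\alpha)\Big| \le \underbrace{\Big|\tfrac{1}{s}\textstyle\sum_i \g_i^\top G^\alpha \g_i - \tr(G^\alpha)\Big|}_{\text{Hutchinson error } E_H} + \underbrace{\tfrac{1}{s}\textstyle\sum_i |\hat{F}_i - \g_i^\top G^\alpha \g_i|}_{\text{Lanczos error } E_L},
\end{equation*}
and the goal is to control each by $\tfrac{\epsilon}{2}\tr(G^\alpha)$ so that, after the $|\log_2(1-x)| \le -\log_2(1-|x|)$ step used in Theorem \ref{IntBound}, the log-space bound $|1-\alpha|^{-1}|\log_2(1-\epsilon)|$ falls out.

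For $E_H$, I would invoke the Hutchinson-Rademacher concentration bound of Roosta-Khorasani and Ascher (the counterpart of Lemma \ref{TraceEstim} for $\pm 1$ vectors). Because Rademacher vectors have fourth moment $3$ rather than the Gaussian $3$ shared with higher moments, the constant deteriorates, giving $s = \lceil 24\ln(2/\delta)/\epsilon^2 \rceil$ as needed to guarantee $E_H \le \tfrac{\epsilon}{2}\tr(G^\alpha)$ with probability at least $1-\delta$; this is precisely why the sample count in Theorem \ref{th:LanczosBound} differs from that of Theorems \ref{IntBound}--\ref{th:ChebyBound}.

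For $E_L$, I would use the classical connection between $m$-step Lanczos and $m$-node Gauss quadrature: $\hat{F}_i = n\, e_1^\top T^\alpha e_1 = n\int x^\alpha d\mu_m(x)$, where $\mu_m$ is the Gauss-Jacobi measure generated by $\q_1$ and $G$, and the exact value equals $n\int x^\alpha d\mu(x)$ for the spectral measure $\mu$. Since Gauss quadrature is exact on polynomials of degree $\le 2m-1$, the standard inequality
\begin{equation*}
    |\hat{F}_i - \g_i^\top G^\alpha \g_i| \le 2n \cdot \min_{p \in \mathcal{P}_{2m-1}} \max_{x \in [\lambda_{min},\lambda_{max}]} |x^\alpha - p(x)|
\end{equation*}
reduces the problem to a best-polynomial-approximation question on $[\lambda_{min},\lambda_{max}]$. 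I would then apply a Bernstein-ellipse argument: after affinely mapping $[\lambda_{min},\lambda_{max}]$ to $[-1,1]$, the branch singularity of $x^\alpha$ sits at $-(\lambda_{max}+\lambda_{min})/(\lambda_{max}-\lambda_{min})$, yielding the Bernstein parameter $\rho = (\sqrt{\kappa}+1)/(\sqrt{\kappa}-1)$. Bounding the supremum of $x^\alpha$ on the ellipse yields a Chebyshev-truncation error of order $\lambda_{max}^\alpha \kappa^{\alpha+1/2}\, \rho^{-2m}$; substituting $m = \lceil \tfrac{1}{4}\sqrt{\kappa}\log(\kappa^{\alpha+1/2}/\epsilon)\rceil$ and using $\rho^{-2m} \le \exp(-4m/(\sqrt{\kappa}+1))$ drives this below $\epsilon \lambda_{max}^\alpha \le \epsilon\,\tr(G^\alpha)$, as required.

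The main obstacle is the quadrature step: establishing the Bernstein-ellipse bound for the non-smooth map $x \mapsto x^\alpha$ (which has a branch point at the origin) with the correct $\kappa$-dependence inside the logarithm, and translating the resulting absolute polynomial-approximation error into a relative trace error. The Hutchinson half is routine once the concentration constant is updated for Rademacher entries, and the final log-space bookkeeping is identical to Theorem \ref{IntBound}; everything hinges on obtaining the sharp geometric decay rate $((\sqrt{\kappa}-1)/(\sqrt{\kappa}+1))^{2m}$ that justifies the $\tfrac{1}{4}\sqrt{\kappa}\log(\cdot)$ scaling of $m$, a rate characteristically better than the $\sqrt{\kappa}\,\epsilon^{-1/(2\alpha)}$ dependence of the Chebyshev bound in Theorem \ref{th:ChebyBound}.
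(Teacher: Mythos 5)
Your proposal is correct and lands on the same underlying machinery as the paper, but you take the long way around: the paper's proof is essentially a two-line application of the stochastic Lanczos quadrature theorem of Ubaru, Chen and Saad \cite{ubaru2017fast}, quoted as a black-box lemma, followed by the computation of its constant $K = (\lambda_{max}-\lambda_{min})(\sqrt{\kappa}-1)^2 M_\rho /(\sqrt{\kappa}\, m_\rho) \le \kappa^{\alpha+1/2}$ using $\lambda_{max}-\lambda_{min}\le 1$ and $M_\rho/m_\rho = \kappa^\alpha$ for $f(x)=x^\alpha$; the log-space bookkeeping is then identical to Theorem \ref{IntBound}. What you do instead is re-derive that cited lemma from first principles --- the Hutchinson/Lanczos error split with $\epsilon/2$ each, the identity $F_i = \sqrt{n}\,e_1^\top T^\alpha e_1$ (which you verify correctly via $\q_1 = \g_i/\sqrt{n}$ and orthogonality), Gauss-quadrature exactness on $\mathcal{P}_{2m-1}$, and the Bernstein-ellipse rate $\rho = (\sqrt{\kappa}+1)/(\sqrt{\kappa}-1)$ --- which is exactly the internal structure of \cite{ubaru2017fast}. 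This buys self-containedness at the cost of re-proving known results. Two small corrections: the constant $24$ does not arise because the Rademacher estimator is \emph{worse} than the Gaussian one (the Roosta-Khorasani--Ascher bound for Rademacher vectors is $6\epsilon^{-2}\ln(2/\delta)$, better than the Gaussian $8\epsilon^{-2}\ln(2/\delta)$); it arises from applying that bound at accuracy $\epsilon/2$, giving $6\cdot 4 = 24$. And in the quadrature step your final comparison drops the factor of $n$ carried by $\hat{F}_i$; the clean way to close it, as in \cite{ubaru2017fast}, is to bound the per-vector error relative to $n\lambda_{min}^\alpha \le \tr(G^\alpha)$, which is where the ratio $M_\rho/m_\rho = \kappa^\alpha$ (and hence the implicit assumption $\lambda_{min}>0$, shared by the theorem statement itself) enters.
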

\begin{remark}
   Theorem \ref{th:LanczosBound} shows that $s = \mathcal{O}(1/\epsilon^2)$ and $m = \mathcal{O}\prn*{\max(\alpha,1)\sqrt{\kappa}\log(\kappa/\epsilon)}$ are sufficient to obtain a good approximation. Note that $m$ in Algorithm \ref{alg:LanczosAlgo} is the number of Lanczos steps, which can also be regarded as the degree of the Lanczos method. Compared with  $ m= \mathcal{O}(\kappa\log(1/\kappa\epsilon))$ of Theorem \ref{th:TaylorBound} and $ m=\mathcal{O}(\sqrt[2\alpha]{1/\epsilon})$ of Theorem \ref{th:ChebyBound}, Theorem \ref{th:LanczosBound} only needs $m=\mathcal{O}(\sqrt{\kappa}\log(\kappa/\epsilon))$ to achieve the desired approximation accuracy in consideration of all these approximations require $s = \mathcal{O}(1/\epsilon^2)$. This implies that the Lanczos method enjoys better theoretical properties than explicit polynomial approximation methods. However, it requires extra reorthogonalization in limited-precision computation models, which causes higher computational cost.
\end{remark}

\begin{remark}
    It worth noting that in most of the derived error bounds, the selection of $\epsilon$ is independent of $n$, which means that $s$ and $m$ do not need to scale up along with $n$ to guarantee the same level of accuracy. Therefore, for a large enough $n$, we can achieve arbitrary level of accuracy by selecting increasingly larger $s$ and $m$ under the precondition that $s \ll n$ and $m \ll n$. For some of the other bounds, e.g. the $\lambda_{min} = 0$ case in Theorem 2, 3, the upper bounds involve $n$ so that $m$ needs to scale up with $n$. When $\alpha > 1$, we have $m = O(\sqrt[\alpha]{n/\epsilon})$ in Theorem 2 (or $m = O(\sqrt[2\alpha]{n/\epsilon})$ in Theorem 3) so that the increment of $m$ is slower than $n$ (i.e. $m = o(n)$), which results in the same conclusion as above. Otherwise if $\alpha < 1$ which is rarely the case, there exists an upper limit for the achievable level of approximation accuracy (i.e. a lower bound for $\epsilon$) using the developed algorithms under the precondition that $s \ll n$ and $m \ll n$. However, our simulation studies show that the achieved accuracy is still desirable in this situation (please refer to the appendix).
\end{remark}

\section{Approximating Matrix-based R\'enyi's Entropy: From the Viewpoint of Matrix Approximation}
Previous section establishes the efficient approximations from the viewpoint of randomized linear algebra, in which the main computation cost comes from matrix-vector multiplications required in both power iteration and trace estimation, leading to $\mathcal{O}(ms\cdot \nnz(G))$ time complexity, where $m = \alpha$ for integer $\alpha$-orders and $\nnz(G)$ denotes the cost of calculating matrix-vector multiplication using the straightforward approach. In view of the fact that the kernel matrix is usually dense in practice, $\nnz(G)$ is proportional to $\mathcal{O}(n^2)$, which is still unaffordable in large-scale data science tasks. We hence expect to take full advantage of the structure of the kernel matrix $G$ to further reduce the computation burden. To this end, we first establish the connection between matrix-based R\'enyi's entropy approximation and kernel matrix approximation in the following theorem:
\begin{theorem}
    \label{th:NormBound}
    For any symmetric PSD matrix $G$ and its symmetric approximation $\tilde{G}$, the matrix-based R\'enyi's $\alpha$-order entropy of $\tilde{G}$ is bounded by
    \begin{equation*}
    	\abs*{\S_\alpha(G) \!-\! \S_\alpha(\tilde{G})} \le \abs*{\frac{\alpha}{1 - \alpha} \log_2 (1 - \sqrt{n} \norm{G^{-1}}_2 \norm{G - \tilde{G}}_2)}.
    \end{equation*}
\end{theorem}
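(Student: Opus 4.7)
The plan is to mimic the strategy used in the proof of Theorem~\ref{IntBound}. Rewriting
\[
|\S_\alpha(G) - \S_\alpha(\tilde G)| = \frac{1}{|1-\alpha|}\left|\log_2\frac{\tr(\tilde G^\alpha)}{\tr(G^\alpha)}\right|,
\]
the task reduces to sandwiching the multiplicative perturbation of $\tr(G^\alpha)$ when $G$ is replaced by $\tilde G$.

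The core idea is to convert the operator-norm distance $\|G-\tilde G\|_2$ into a uniform relative perturbation on every eigenvalue of $G$. I would sort the eigenvalues of $G$ and $\tilde G$ in decreasing order and pair them; Weyl's inequality gives $|\tilde\lambda_i - \lambda_i|\le\|G-\tilde G\|_2$, and combined with the lower bound $\lambda_i\ge 1/\|G^{-1}\|_2$ coming from the positive definiteness of $G$, this yields the relative bound $\tilde\lambda_i/\lambda_i \in [1-\eta,1+\eta]$ with $\eta := \|G^{-1}\|_2 \|G-\tilde G\|_2$. Provided $\eta<1$ (which is weaker than the implicit assumption $\sqrt n\,\eta<1$ required to make the right-hand side of the claim finite), every $\tilde\lambda_i$ stays strictly positive, so $\tilde\lambda_i^\alpha$ is well defined for any $\alpha>0$. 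Raising the scalar sandwich to the $\alpha$-th power, multiplying by the non-negative weights $\lambda_i^\alpha$, summing, and dividing by $\tr(G^\alpha)$ then forces $\tr(\tilde G^\alpha)/\tr(G^\alpha) \in [(1-\eta)^\alpha,(1+\eta)^\alpha]$.

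Taking $\log_2$ and using $|\log_2(1-x)| \ge |\log_2(1+x)|$ for $x\in(0,1)$ (the same monotonicity trick as in Theorem~\ref{IntBound}) delivers
\[
|\S_\alpha(G)-\S_\alpha(\tilde G)| \le \left|\frac{\alpha}{1-\alpha}\log_2(1-\eta)\right|,
\]
from which the claim follows by weakening $\eta$ to $\sqrt n\,\eta$, since the right-hand side is monotonically increasing in the perturbation size on $(0,1)$.

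The main obstacle is the second step: one must ensure that the scalar relative perturbation $(1\pm\eta)$ can be raised to a fractional power and still control the ratio of traces. The key observation that makes this work is that the numerator and denominator of $\tr(\tilde G^\alpha)/\tr(G^\alpha)$ are weighted sums of $(\tilde\lambda_i/\lambda_i)^\alpha$ and of $1$ against the common non-negative weights $\lambda_i^\alpha$, so the pointwise sandwich $(1-\eta)^\alpha \le (\tilde\lambda_i/\lambda_i)^\alpha \le (1+\eta)^\alpha$ transfers cleanly to the aggregate ratio; everything else is elementary manipulation of scalar inequalities.
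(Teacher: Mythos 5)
Your proof is correct, and it reaches the stated bound by a genuinely different key lemma than the paper. After the common first step (reducing to the ratio $\tr(\tilde G^\alpha)/\tr(G^\alpha)$ and passing to per-eigenvalue ratios), the paper invokes a relative eigenvalue perturbation result of Li (2005) for normal matrices, which directly asserts the existence of a pairing $\tau$ with $\max_i \abs{(\mu_{\tau(i)}-\lambda_i)/\lambda_i} \le \sqrt{n(n-s+1)}\,\norm{G^{-1}}_2\norm{G-\tilde G}_2$, specialized to $s=n$ for symmetric $\tilde G$. You instead pair eigenvalues by sorting and apply Weyl's inequality, $\abs{\tilde\lambda_i-\lambda_i}\le\norm{G-\tilde G}_2$, then convert to a relative bound via $\lambda_i \ge \lambda_{\min} = 1/\norm{G^{-1}}_2$. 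This is more elementary (no external perturbation theorem beyond Weyl) and in fact yields the \emph{strictly tighter} bound $\abs*{\frac{\alpha}{1-\alpha}\log_2(1-\eta)}$ with $\eta=\norm{G^{-1}}_2\norm{G-\tilde G}_2$, i.e.\ without the $\sqrt{n}$ factor; you correctly recover the stated inequality by monotonicity of $-\log_2(1-x)$ under the implicit assumption $\sqrt{n}\,\eta<1$ that the theorem needs anyway for its right-hand side to be defined. Your handling of the fractional power and of the trace ratio as a weighted average of $(\tilde\lambda_i/\lambda_i)^\alpha$ is also cleaner than the paper's one-sided ``$\le \max_i$'' step, which silently needs the matching $\min_i$ bound when the ratio falls below one. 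The only thing the paper's route buys is generality of the cited lemma (it covers normal, not necessarily symmetric, perturbed matrices); for the symmetric setting of the theorem your argument dominates.
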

\begin{proof}
    Noticing that
    \begin{align}
    	\abs*{\S_\alpha(G) - \S_\alpha(\tilde{G})} &= \abs*{\frac{1}{1-\alpha} \log_2 \frac{\sum_{i=1}^n \lambda_i^\alpha(\tilde{G})}{\sum_{i=1}^n \lambda_i^\alpha(G)}} \nonumber\\
    	&\le \abs*{\frac{1}{1-\alpha} \log_2 \max_i \frac{\lambda_i^\alpha(\tilde{G})}{\lambda_i^\alpha(G)}} \nonumber\\
    	&= \abs*{\frac{\alpha}{1-\alpha} \log_2 \max_i \frac{\lambda_i(\tilde{G})}{\lambda_i(G)}}. \label{MaxEigen}
    \end{align}
    Note that there is no restriction on the order of eigenvalues, i.e. we can reorder the eigenvalues $\lambda_i(G)$ and $\lambda_i(\tilde{G})$ arbitrarily for $i=1,...,n$. To upper bound the eigenvalue ratio above, we introduce the following lemma:
    \begin{lemma} \cite{li2005}
        there exists a permutation $\tau$ of $\{1, \cdots, n\}$ such that for normal non-singular matrix $A$,
        \begin{equation}
        	\max \abs*{\frac{\mu_{\tau(i)} - \lambda_i}{\lambda_i}} \le \sqrt{n(n-s+1)} \norm{A^{-1}}_2 \norm{E}_2.
        	\label{PertBound}
        \end{equation}
        where $\lambda_i$ and $\mu_i$ are eigenvalues of $A$ and $\tilde{A} = A + E$ respectively. $s$ comes from assuming that there exists a unitary matrix $U$ such that $U^*AU = \mathbf{diag}(A_1, ..., A_s)$, where $1 \le s \le n$, and $A_i$ is an upper triangular matrix for $i = 1, ..., s$.
    \end{lemma}
    When the approximation $\tilde{G}$ is symmetric, $\tilde{G}$ is also a normal matrix and $s = n$ is guaranteed. Combining (\ref{MaxEigen}) and (\ref{PertBound}) yields the final result.
\end{proof}

Theorem \ref{th:NormBound} verifies the availability of reducing the computational burden by approximating the kernel matrix. A common solution is to build approximations using low-rank algorithms e.g. the Nystr\"om method, which are shown to be an effective technique to tackle large-scale datasets with no significant decrease in performance \cite{williams2000using}. However, computing matrix-based R\'enyi's entropy involves every eigenvalue of $G$, which is contradictory to the behavior of these algorithms to keep only larger eigenvalues and ignore smaller ones. This results in losing a large portion of the information contained in the eigenspectrum, and can greatly harm the performance for matrix-based R\'enyi's entropy approximation.

Motivated by the recent progress on kernel approximation \cite{sisi2017memory}, we consider exploring both the clustering and low-rank structure of kernel matrices. In case of the most widely used shift-invariant kernels e.g. Gaussian kernel and Laplacian kernel, we can rewrite their functions as $\varphi(\x_i, \x_j ) = g(\sigma(\x_i - \x_j))$, where $g: \mathbb{R}^d \rightarrow \mathbb{R}$ is a measurable function and $\sigma$ is the scale parameter. When $\sigma$ is large enough, the off-diagonal entries in $G$ become small in magnitude compared to diagonal elements. In such cases, the majority of information is stored in the diagonal blocks, so it is natural to adopt low-rank approximation to reduce computational cost. However, for relatively small $\sigma$, the impacts of off-diagonal blocks cannot be ignored since the kernel matrix remains dense. To accommodate a wider range of $\sigma$ values, we employ a block low-rank structure to reduce the computational burden. Take some partition $\mathcal{V}_1, \cdots, \mathcal{V}_c$ of the given samples, where $\mathcal{V}_1 \cup \cdots \cup \mathcal{V}_c = \{n\}$ and $\mathcal{V}_s \cap \mathcal{V}_t = \phi$ for any $1 \le s < t \le c$, the kernel matrix $G$ can be approximated by $\tilde{G}$ with both block and low-rank structure as given below:
\begin{equation} \label{eq:approx}
    G \approx \tilde{G} = \left[
    \begin{matrix}
    	G^{(1,1)}   & G_k^{(1,2)} & \cdots & G_k^{(1,c)} \\
    	G_k^{(2,1)} & G^{(2,2)}   & \cdots & G_k^{(2,c)} \\
    	\vdots      & \vdots      & \ddots & \vdots      \\
    	G_k^{(c,1)} & G_k^{(c,2)} & \cdots & G^{(c,c)}
    \end{matrix}
    \right],
\end{equation} 
where $G_k^{(s,t)}$ denotes the rank-$k$ approximation of the sub-matrix $G^{(s,t)}$ constructed by rows $\mathcal{V}_s$ and columns $\mathcal{V}_t$ of the original kernel matrix $G$. Observing that
\begin{equation*}
    \|G - \tilde{G}\|_F^2 \leq \sum_{i,j} \varphi(\x_i, \x_j)^2 - \sum_{s=1}^c \sum_{i,j \in \mathcal{V}_s} \varphi(\x_i, \x_j)^2,
\end{equation*}
therefore, minimizing the difference between $G$ and $\tilde{G}$ is equivalent to maximizing the second term:
\begin{equation}
    \min \|G - \tilde{G}\|_F^2 \Leftrightarrow \max \sum_{s=1}^c \sum_{i,j \in \mathcal{V}_s} \varphi(\x_i, \x_j)^2 := D^{kernel}.
\end{equation}
However, directly maximizing $D^{kernel}$ can result in all the data being assigned to the same cluster. A common way to solve this problem is to normalize $D$ by each cluster's size $|\mathcal{V}_s|$. This leads to the spectral clustering objective:
\begin{equation} \label{eq:spectral_cluster}
    D^{kernel}(\{\mathcal{V}_s\}_{s=1}^c) = \sum_{s=1}^c \frac{1}{\abs{\mathcal{V}_s}} \sum_{i,j \in \mathcal{V}_s} \varphi(\x_i, \x_j)^2.
\end{equation}
Directly optimizing (\ref{eq:spectral_cluster}) is computational infeasible since we have to iterate all possibilities of the partition $\{\mathcal{V}_s\}_{s=1}^c$. For applicable minimization in practice, we adopt a lower bound for our objective $D^{kernel}(\{\mathcal{V}_s\}_{s=1}^c)$:
\begin{theorem}
    For any shift-invariant Lipschitz continuous kernel function $\varphi$,
    \begin{equation*}
    	D^{kernel}(\{\mathcal{V}_s\}_{s=1}^c) \ge \frac{1}{2n} - R^2 D^{kmeans}(\{\mathcal{V}_s\}_{s=1}^c),
    \end{equation*}
    where $R$ is a constant depending on the kernel function, and
    \begin{equation*}
    	D^{kmeans}(\{\mathcal{V}_s\}_{s=1}^c) \equiv \sum_{s=1}^c \frac{1}{\abs{\mathcal{V}_s}} \sum_{i,j \in \mathcal{V}_s} \norm{x_i - x_j}_2^2
    \end{equation*}
    is the $k$-means objective function.
    \label{th:ObjBound}
\end{theorem}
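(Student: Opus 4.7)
The approach is to derive a pointwise lower bound of the form $\varphi(\x_i,\x_j)^2 \ge \tfrac{1}{2n^2} - R^2\|\x_i-\x_j\|_2^2$ and then sum it over pairs inside each cluster so that the constant piece aggregates to $\tfrac{1}{2n}$ and the variable piece reassembles into $R^2 D^{kmeans}$.

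\emph{Step 1: exploit shift-invariance and Lipschitz continuity.} Writing $\varphi(\x_i,\x_j) = g(\x_i - \x_j)$ for some $g$, and noting that under the paper's normalization the diagonal entries equal $\tfrac{1}{n}$, we have $g(0) = \tfrac{1}{n}$. Lipschitz continuity of $g$ (with the kernel's scale parameter folded into the constant) then yields
\[
\varphi(\x_i,\x_j) \;\ge\; \tfrac{1}{n} - L\,\|\x_i-\x_j\|_2
\]
for some $L > 0$ depending only on the kernel.

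\emph{Step 2: square this bound.} The subtlety is that the right-hand side can be negative, so squaring does not automatically preserve the inequality. I would split into two cases. If $\tfrac{1}{n} - L\|\x_i-\x_j\|_2 \ge 0$, then nonnegativity of $\varphi$ gives $\varphi(\x_i,\x_j)^2 \ge (\tfrac{1}{n} - L\|\x_i-\x_j\|_2)^2$, and Young's inequality $(a-b)^2 \ge \tfrac{1}{2}a^2 - b^2$ (equivalently $2ab \le \tfrac{1}{2}a^2 + 2b^2$) produces
\[
\varphi(\x_i,\x_j)^2 \;\ge\; \tfrac{1}{2n^2} - L^2\|\x_i-\x_j\|_2^2 .
\]
Otherwise $L\|\x_i-\x_j\|_2 > \tfrac{1}{n}$, so $L^2\|\x_i-\x_j\|_2^2 > \tfrac{1}{n^2} > \tfrac{1}{2n^2}$ and the target right-hand side is already negative, hence dominated by $\varphi(\x_i,\x_j)^2 \ge 0$. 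Setting $R = L$ gives the pointwise bound in all cases.

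\emph{Step 3: sum over clusters.} Substituting this into the definition of $D^{kernel}$,
\[
D^{kernel}(\{\mathcal{V}_s\}_{s=1}^c) \;\ge\; \sum_{s=1}^{c} \frac{1}{|\mathcal{V}_s|}\!\!\sum_{i,j\in\mathcal{V}_s}\!\!\prn*{\tfrac{1}{2n^2} - R^2\|\x_i-\x_j\|_2^2}.
\]
The constant part equals $\sum_s \tfrac{|\mathcal{V}_s|^2}{2n^2|\mathcal{V}_s|} = \tfrac{1}{2n^2}\sum_s |\mathcal{V}_s| = \tfrac{1}{2n}$ since the clusters partition $[n]$, while the remaining part is exactly $-R^2 D^{kmeans}(\{\mathcal{V}_s\}_{s=1}^c)$ by definition, yielding the stated inequality. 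The main (and only nontrivial) obstacle is the squaring step of Step 2, for which the case split above, combined with Young's inequality, is the clean way to avoid having to worry about the sign of $\tfrac{1}{n} - L\|\x_i-\x_j\|_2$; every other step is straightforward algebraic bookkeeping.
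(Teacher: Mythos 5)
Your proof is correct and follows essentially the same route as the paper: the pointwise bound $\varphi(\x_i,\x_j)^2 \ge \tfrac{1}{2n^2} - R^2\norm{\x_i-\x_j}_2^2$ from shift-invariance and Lipschitz continuity, followed by summation over the normalized clusters. The only cosmetic difference is in the squaring step: the paper rearranges to $G_{ij} + R\norm{\x_i-\x_j}_2 \ge \tfrac{1}{n}$ so that both sides are nonnegative before squaring and then applies $2ab \le a^2+b^2$, which avoids your case split but yields the identical pointwise inequality.
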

\begin{proof}
    For any shift-invariant kernel function, there exists a real valued function $f \in \mathbb{R}^d \mapsto \mathbb{R}$ so that for any data points $\bm{x}_i, \bm{x}_j \in \mathbb{R}^d$,
    \begin{equation*}
    	\varphi(\bm{x}_i, \bm{x}_j) = f(\bm{x}_i - \bm{x}_j) \ge f(\mathbf{0}) - L\norm{\bm{x}_i - \bm{x}_j}_2,
    \end{equation*}
    where $L$ is the Lipschitz constant of $f(\cdot)$. Then the elements of matrix $G$ satisfies:
    \begin{equation*}
    	\begin{split}
    		G_{ij} &= \frac{1}{n}\frac{\kappa(\bm{x}_i, \bm{x}_j)}{\sqrt{\kappa(\bm{x}_i, \bm{x}_i)\kappa(\bm{x}_j, \bm{x}_j)}} = \frac{f(\bm{x}_i - \bm{x}_j)}{n f(\bm{0})} \\
    		&\ge \frac{1}{n f(\bm{0})}\prn*{f(\mathbf{0}) - L\norm{\bm{x}_i - \bm{x}_j}_2} \\
    		&= \frac{1}{n} - \frac{L}{n f(\bm{0})}\norm{\bm{x}_i - \bm{x}_j}_2.
    	\end{split}
    \end{equation*}
    Let $R \equiv L/n f(\bm{0})$, we have
    \begin{equation*}
    	G_{ij} + R\norm{\bm{x}_i - \bm{x}_j}_2 \ge \frac{1}{n}.
    \end{equation*}
    Taking the square of both sides
    \begin{equation*}
    	G_{ij}^2 + R^2\norm{\bm{x}_i - \bm{x}_j}_2^2 + 2 G_{ij}R\norm{\bm{x}_i - \bm{x}_j}_2 \ge \frac{1}{n^2}.
    \end{equation*}
    From the classical arithmetic and geometric mean inequality, we get the following bound:
    \begin{equation*}
    	2 G_{ij}R\norm{\bm{x}_i - \bm{x}_j}_2 \le G_{ij}^2 + R^2\norm{\bm{x}_i - \bm{x}_j}_2^2,
    \end{equation*}
    and therefore:
    \begin{equation*}
    	G_{ij}^2 + R^2\norm{\bm{x}_i - \bm{x}_j}_2^2 \ge \frac{1}{2n^2}.
    \end{equation*}
    According to the definition of $D^{kernel}$:
    \begin{equation*}
    	\begin{split}
    		D^{kernel}(\{\mathcal{V}_s\}_{s=1}^c) &= \sum_{s=1}^c \frac{1}{\abs{\mathcal{V}_s}} \sum_{i, j \in \mathcal{V}_s} G_{ij}^2 \\
    		&\ge \sum_{s=1}^c \frac{1}{\abs{\mathcal{V}_s}} \sum_{i, j \in \mathcal{V}_s} \prn*{\frac{1}{2n^2} - R^2 \norm{\bm{x}_i - \bm{x}_j}_2^2} \\
    		&= \frac{1}{2n} - R^2 \sum_{s=1}^c \frac{1}{\abs{\mathcal{V}_s}} \sum_{i, j \in V_s} \norm{\bm{x}_i - \bm{x}_j}_2^2 \\
    		&= \frac{1}{2n} - R^2 D^{kmeans}(\{\mathcal{V}_s\}_{s=1}^c),
    	\end{split}
    \end{equation*}
    which finishes the proof.
\end{proof}

Theorem \ref{th:ObjBound} suggests that the $k$-means algorithm is an ideal choice for selecting the partition $\mathcal{V}_1, \cdots, \mathcal{V}_c$. The progress of block low-rank kernel approximation is summarized in Algorithm \ref{alg:LowRankApprox}, where the $k$ largest singular values and the corresponding singular vectors are calculated by Randomized SVD algorithm \cite{halko2011} in practice, which takes $\mathcal{O}(n_r n_ck)$ arithmetic operations for a $n_r \times n_c$ sub-matrix. Through the block low-rank approximation, the complexity of matrix-vector multiplication is reduced to $\mathcal{O}(n^2/c+nck)$. The following proposition gives the error bound of kernel approximation using the given strategy:

\begin{algorithm}
    \caption{Block Low-Rank Kernel Matrix Approximation}
    \label{alg:LowRankApprox}
    \begin{algorithmic}[1]
    	\STATE \textbf{Input:} Shift-invariant kernel matrix $G \in \mathbb{R}^{n \times n}$, rank $k$, number of clusters $c$.
    	\STATE \textbf{Output:}  Approximation to $G$.
    	\STATE Obtain a partition $\mathcal{V}_1, ..., \mathcal{V}_c$ by the $k$-means algorithm with $c$ clusters.
    	\STATE Rearrange the kernel matrix $G$ as a block matrix by the partition $\mathcal{V}_1, \cdots, \mathcal{V}_c$. 
    	\STATE Obtain the best rank-$k$ approximation of each off-diagonal block.
    	
    	\STATE \textbf{Return:}  Construct $\tilde{G}$ by (\ref{eq:approx}).
    \end{algorithmic}
\end{algorithm}
\begin{proposition}
    Given samples $\{ \x_i\}_{i=1}^n \subset \mathbb{R}^d$ with a partition $\mathcal{V}_1, ..., \mathcal{V}_c$. Let $\tilde{G}$ be the output of Algorithm \ref{alg:LowRankApprox} and let the radius of partition $\mathcal{V}_i$ be $r_i$ for $ 1 \leq i \leq c $. Assuming $r_1 \le r_2 \le ... \le r_c$, then for any shift-invariant Lipschitz continuous kernel function, we have
    \begin{equation}
    	\norm{G - \tilde{G}}_F \le 4Lk^{-\frac{1}{d}} \sqrt{2r},
    \end{equation}
    where $L$ is the Lipschitz constant of the kernel function, and $r = \sum_{i=1}^c r_i^2 \abs{\mathcal{V}_i} \sum_{j=i+1}^c \abs{\mathcal{V}_j}$.
    \label{MEKA}
\end{proposition}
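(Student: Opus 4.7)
The plan is to decompose the Frobenius error block by block and bound the approximation error of each off-diagonal block by constructing an explicit rank-$k$ approximation from a geometric covering of the smaller cluster. Since the diagonal blocks of $\tilde G$ are exact copies of $G$, we have
\begin{equation*}
\|G-\tilde G\|_F^2 \;=\; \sum_{1\le s\ne t\le c}\|G^{(s,t)}-G_k^{(s,t)}\|_F^2,
\end{equation*}
so it suffices to control each off-diagonal block individually. By symmetry ($G^{(t,s)}=(G^{(s,t)})^\top$ and best rank-$k$ Frobenius approximation commutes with transposition), we can pair $(s,t)$ with $(t,s)$ for $s<t$ and use the smaller-radius side $r_s$ in the analysis; this is exactly why the stated $r$ sums only $r_i^2|\mathcal V_i|\sum_{j>i}|\mathcal V_j|$ over ordered pairs.

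Next I would build a specific rank-$k$ approximation of $G^{(s,t)}$ to upper-bound the best one. Using a standard covering argument in $\mathbb{R}^d$, I would partition the cluster $\mathcal{V}_s$ (of radius $r_s$) into $k$ sub-clusters $\{\mathcal{V}_s^{(\ell)}\}_{\ell=1}^k$ each of radius at most $C\, r_s k^{-1/d}$ for an absolute constant $C$ (with $C$ chosen so that the final constant works out to $4$), and pick a representative $\tilde{\x}_{s,\ell}\in\mathcal{V}_s^{(\ell)}$ from each. Define the approximation $\widehat G^{(s,t)}$ entrywise by $\widehat G^{(s,t)}_{ij}=G_{\tilde{\x}_{s,\ell},\x_j}$ whenever $\x_i\in\mathcal V_s^{(\ell)}$; this matrix has at most $k$ distinct rows, hence rank at most $k$, so $\|G^{(s,t)}-G_k^{(s,t)}\|_F\le\|G^{(s,t)}-\widehat G^{(s,t)}\|_F$.

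With the shift-invariant Lipschitz hypothesis, each entrywise error is controlled as
\begin{equation*}
\bigl|G_{ij}-\widehat G^{(s,t)}_{ij}\bigr|\;\le\; L\,\|\x_i-\tilde{\x}_{s,\ell}\|_2 \;\le\; L\,C\,r_s\,k^{-1/d},
\end{equation*}
exactly as in the proof of Theorem \ref{th:ObjBound} (with the $1/n$ and $f(\mathbf{0})$ factors absorbed into $L$). Squaring, summing over the $|\mathcal V_s|\cdot|\mathcal V_t|$ entries of the block, and then summing over ordered pairs $s<t$ while doubling for $t<s$ yields
\begin{equation*}
\|G-\tilde G\|_F^2 \;\le\; 2L^2 C^2 k^{-2/d}\sum_{s=1}^c r_s^2\,|\mathcal V_s|\sum_{t=s+1}^c|\mathcal V_t|\;=\;2L^2C^2k^{-2/d}r,
\end{equation*}
after which taking a square root delivers the claimed $\|G-\tilde G\|_F\le 4Lk^{-1/d}\sqrt{2r}$ once the covering constant $C$ is matched to $4$.

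The main obstacle I anticipate is the covering step: one has to argue that any Euclidean ball of radius $r_s$ in $\mathbb R^d$ (and hence the finite point set $\mathcal V_s$) can be split into $k$ subsets of diameter bounded by $4r_s k^{-1/d}$ with the precise constant the theorem asks for; a gridding or dyadic-partition argument delivers this, but carrying the constant through cleanly (and handling the degenerate case $|\mathcal V_s|<k$, where the block is already of rank at most $|\mathcal V_s|$ and the error is $0$) is the only subtle part. Everything else is routine once the covering and Lipschitz bounds are in place.
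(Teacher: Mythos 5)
Your proposal is correct and follows the same skeleton as the paper's proof: decompose the squared Frobenius error over the off-diagonal blocks (the diagonal blocks are exact), bound each block error by $4Lk^{-1/d}\sqrt{|\mathcal{V}_s||\mathcal{V}_t|}\min(r_s,r_t)$, and sum using the ordering $r_1\le\cdots\le r_c$ to identify $\min(r_s,r_t)=r_s$ for $s<t$, which is exactly where the quantity $r$ comes from. The only difference is that the paper imports the per-block inequality wholesale as Theorem~2 of \cite{sisi2017memory}, whereas you sketch a self-contained proof of it: cover the smaller-radius cluster by $k$ cells, build a piecewise-constant-in-rows surrogate of rank at most $k$, invoke Eckart--Young, and push the shift-invariant Lipschitz bound through entrywise. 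That sketch is sound, and the constant you worry about does come out dimension-free: a volumetric covering bound gives at most $(1+2r_s/\epsilon)^d$ balls of radius $\epsilon$, so $k$ cells suffice with $\epsilon\le 4r_sk^{-1/d}$ when $k\ge 2^d$, and for $k<2^d$ the bound $4r_sk^{-1/d}\ge 2r_s$ exceeds the diameter and is trivially valid. The one cosmetic caveat is that $G$ here is the normalized matrix $G_{ij}=\varphi(\x_i,\x_j)/(nf(\mathbf{0}))$, so the effective Lipschitz constant carries a $1/(nf(\mathbf{0}))$ factor; you note this is "absorbed into $L$," which is the same looseness present in the paper's own statement. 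Your route buys a self-contained argument at the cost of tracking the covering constant; the paper's buys brevity by outsourcing that step.
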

\begin{proof}
    Denote $G^{(s,t)}$ as one block of matrix $G$ with rows $\mathcal{V}_s$ and columns $\mathcal{V}_t$. By Theorem 2 in \cite{sisi2017memory}, for any shift-invariant Lipschitz continuous kernel function with Lipschitz constant $L$, we have the following bound:
    \begin{equation*}
    	\norm{G^{(s,t)} - G_k^{(s,t)}}_F \le 4Lk^{-\frac{1}{d}} \sqrt{\abs{\mathcal{V}_s}\abs{\mathcal{V}_t}} \min(r_s, r_t).
    \end{equation*}
    Summing up the inequality above for all off-diagonal blocks yields the final result:
    \begin{equation*}
    	\begin{split}
    		\norm{G - \tilde{G}}_F &= \sqrt{\sum_{s,t=1...c, s < t} 2\norm{G^{(s,t)} - G_k^{(s,t)}}_F^2} \\
    		&\le 4Lk^{-\frac{1}{d}} \sqrt{\sum_{s,t=1...c, s < t} 2\abs{\mathcal{V}_s}\abs{\mathcal{V}_t}{\min}^2(r_s, r_t)}\\
    		&= 4Lk^{-\frac{1}{d}} \sqrt{2\sum_{i=1}^c r_i^2 \abs{\mathcal{V}_i} \sum_{j=i+1}^c \abs{\mathcal{V}_j}}.
    	\end{split}
    \end{equation*}
\end{proof}
Combining Theorem \ref{th:NormBound} and Proposition \ref{MEKA} together yields the following corollary on approximation error of matrix-based R\'enyi's entropy with block low-rank approximation technique:
\begin{corollary}
    Under the same conditions in Proposition \ref{MEKA}, the approximation error of $\S_\alpha(G)$ is bounded by
    \begin{equation*}
    	\abs*{\S_\alpha(G) - \S_\alpha(\tilde{G})} \leq \abs*{\frac{\alpha}{1 - \alpha} \log_2 (1 - 4\sqrt{2rn} Lk^{-\frac{1}{d}} \norm{G^{-1}}_2)}.
    \end{equation*}
\end{corollary}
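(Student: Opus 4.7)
The plan is to simply chain the two results already established. Theorem \ref{th:NormBound} produces the bound
\begin{equation*}
    \abs*{\S_\alpha(G) - \S_\alpha(\tilde{G})} \le \abs*{\frac{\alpha}{1-\alpha} \log_2\prn*{1 - \sqrt{n}\,\norm{G^{-1}}_2 \norm{G-\tilde{G}}_2}},
\end{equation*}
so all that remains is to control $\norm{G-\tilde{G}}_2$ using the Frobenius bound supplied by Proposition \ref{MEKA}.

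First I would invoke the standard inequality $\norm{A}_2 \le \norm{A}_F$, valid for every matrix $A$, to deduce
\begin{equation*}
    \norm{G-\tilde{G}}_2 \le \norm{G-\tilde{G}}_F \le 4Lk^{-\frac{1}{d}}\sqrt{2r},
\end{equation*}
where the last inequality is exactly Proposition \ref{MEKA}. Next I would substitute this into Theorem \ref{th:NormBound}. To do this substitution cleanly one checks monotonicity: the map $x \mapsto \abs{\log_2(1-x)}$ is nondecreasing on $x \in [0,1)$, and the quantity $\sqrt{n}\,\norm{G^{-1}}_2 \norm{G-\tilde{G}}_2$ is nonnegative, so replacing $\norm{G-\tilde{G}}_2$ by its upper bound can only enlarge the right-hand side. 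Combining the two displays yields
\begin{equation*}
    \abs*{\S_\alpha(G) - \S_\alpha(\tilde{G})} \le \abs*{\frac{\alpha}{1-\alpha} \log_2\prn*{1 - 4\sqrt{2rn}\,L k^{-\frac{1}{d}} \norm{G^{-1}}_2}},
\end{equation*}
which is exactly the claimed inequality.

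There is no real obstacle here; the statement is a corollary in the literal sense. The only mildly delicate points are (i) the implicit requirement that $4\sqrt{2rn}\,L k^{-\frac{1}{d}}\norm{G^{-1}}_2 < 1$ so that the logarithm is defined, which is the regime of interest (sufficiently fine partition and sufficiently large rank $k$), and (ii) verifying the monotonicity direction used in the substitution, both of which are elementary. No new machinery beyond the $\|\cdot\|_2 \le \|\cdot\|_F$ comparison is needed.
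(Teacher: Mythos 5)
Your proof is correct and follows exactly the route the paper intends: the paper states the corollary as an immediate consequence of combining Theorem \ref{th:NormBound} with Proposition \ref{MEKA}, and your chain via $\norm{G-\tilde{G}}_2 \le \norm{G-\tilde{G}}_F$ together with the monotonicity check is precisely the omitted glue. Nothing is missing.
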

\begin{remark}
	Obviously, there is a trade-off in the choice of $k$ and $c$ between time complexity and approximation error. On the one hand, small $k$ or large $c$ may cause higher approximation error, or even worse that the approximated kernel matrix loses its positive semi-definite property. On the other hand, kernel approximation loses its running time improvement for large $k$ or small $c$. Empirically, we suggest $c\approx\sqrt[4]{n}$ and $k\approx\sqrt{n}$ for the best performance.
\end{remark}

\subsection{Approximation for Matrix-based R\'enyi's Mutual Information and Total Correlation}
In the discussion above, we developed randomized approximations through both stochastic trace estimators and block low-rank kernel approximations for matrix-based R\'enyi's entropy. As natural extensions of the R\'enyi's $\alpha$-order entropy functional, matrix-based R\'enyi's mutual information (Eq.~(\ref{eq:Renyi_MI})) and total correlation (Eq.~(\ref{eq:Renyi_TC})) can also be directly approximated by the established algorithms, since they are just linear combinations of the original definition. Moreover, mutual information and total correlation are fundamental information statistics for most downstream machine learning and neuroscience tasks, such as measuring dependence or causality, feature selection and regularizations in deep neural training. Comprehensive experiments in Section \ref{sec:real_world} demonstrate the excellent performance of the proposed approximation algorithms in multiple real-world data science tasks.

\section{Experimental Results} \label{sec:exp}
In this section, we evaluate the performance of the proposed approximation algorithms on large-scale simulation and real-world datasets. All numerical studies are conducted with an Intel i7-10700 (2.90GHz) CPU, an RTX 2080Ti GPU and 64GB of RAM. Approximation algorithms are implemented in C++ and Python, with k-means algorithm provided by OpenCV \cite{opencv_library}, fundamental linear algebra functions provided by Eigen \cite{eigenweb} and Pytorch respectively.

\subsection{Simulation Studies}
In our experiments, the simulation data are generated by mixture of Gaussian distribution $\frac{1}{2}N(-1,\I_d)+\frac{1}{2}N(1,\I_d)$ with $n = 10,000$ and $d = 10$, the size of the kernel matrix is then $10,000 \times 10,000$. We choose Gaussian kernel $\varphi(\x_i, \x_j) = \exp(-\| \x_i - \x_j \|_2^2/2\sigma^2)$ with $\sigma = 1$ in kernel matrix construction. We set order of rank $k = 80$ and number of clusters $c=20$ in block low-rank approximation experiments. To reduce the impact of randomness, we run each test for $K = 100$ times and report the mean relative error (MRE) and corresponding standard deviation (SD). The oracle $\S_\alpha(G)$ is computed through the trivial $\mathcal{O}(n^3)$ eigenvalue approach. We use the high resolution clock provided by the C++ standard library, whose precision reaches nanoseconds. For comparison, the straightforward eigenvalue approach takes $219.8$ seconds for a $10000 \times 10000$ kernel matrix.

\subsubsection{Randomized Approximation}
We examine the performance of the trace estimation algorithm ``Trace'' and its combination with block low-rank approximation ``Trace + Low-rank'' for R\'enyi's entropy with integer orders. We choose $\alpha = \{2, 3, 5, 8\}$ in Algorithm \ref{alg:IntAlgo} and the number of random Gaussian vectors $s$ ranges from $20$ to $200$ in increments of $20$. The Time vs. MRE curves for different $\alpha$ are shown in Figure \ref{IntExp}, in which the shaded area indicates $\pm 0.25$ SD. It is easy to observe that when $s$ is large enough, ``Trace + Low-rank"  runs faster than ``Trace" while yielding comparable estimation error.

\begin{figure}[t]
    \centering
    \includegraphics[width=0.45\textwidth]{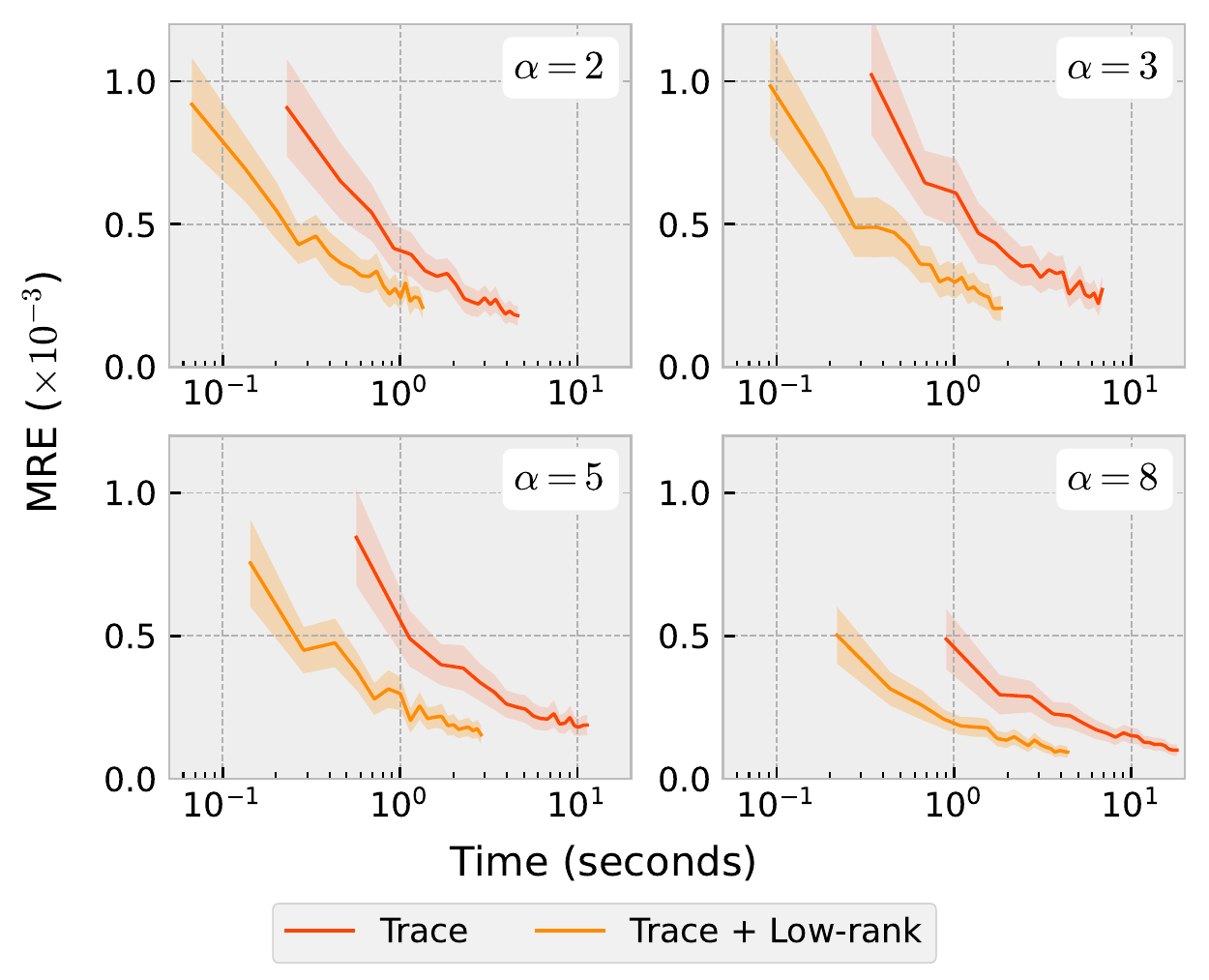}
    \caption{Time vs. MRE curves for integer $\alpha$-order R\'enyi's entropy estimation.}
    \label{IntExp}
\end{figure}

\subsubsection{Fractional Order Approximation}
We further evaluate the approximation effect for fractional $\alpha$ orders. The impact of $\alpha$ on MRE for Taylor, Chebyshev and Lanczos polynomial approximations are reported in Figure \ref{AlphaExp}. It can be seen that the approximation error grows with the increase of $\alpha$ for $\alpha <1$ and decreases otherwise, which supports our claims in Theorems \ref{th:TaylorBound}, \ref{th:ChebyBound} and \ref{th:LanczosBound} in which the term $|\frac{1}{1-\alpha}|$ dominates the approximation error for fractional $\alpha$-orders.

\begin{figure}[t]
    \centering
    \includegraphics[width=0.45\textwidth]{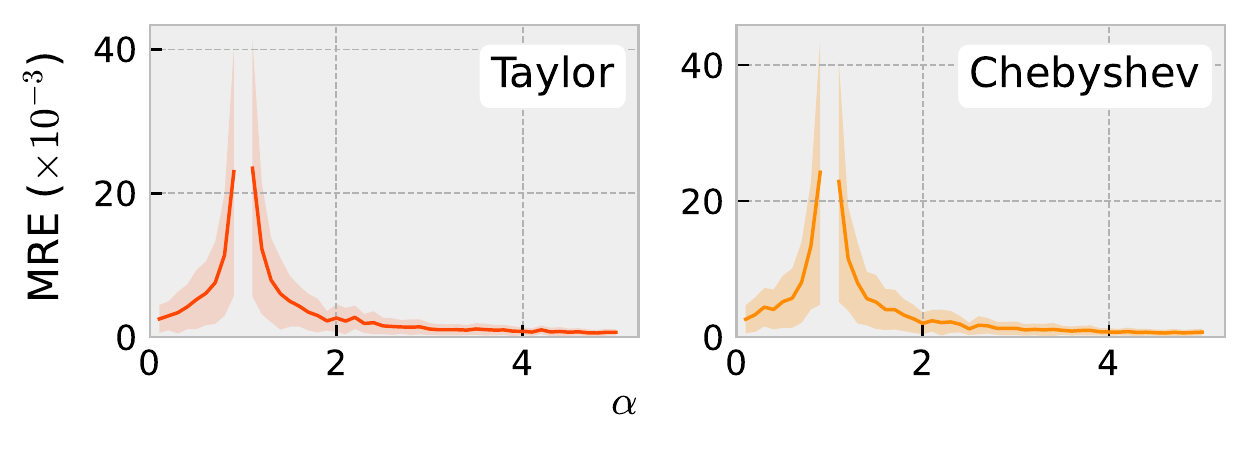}
    \includegraphics[width=0.25\textwidth]{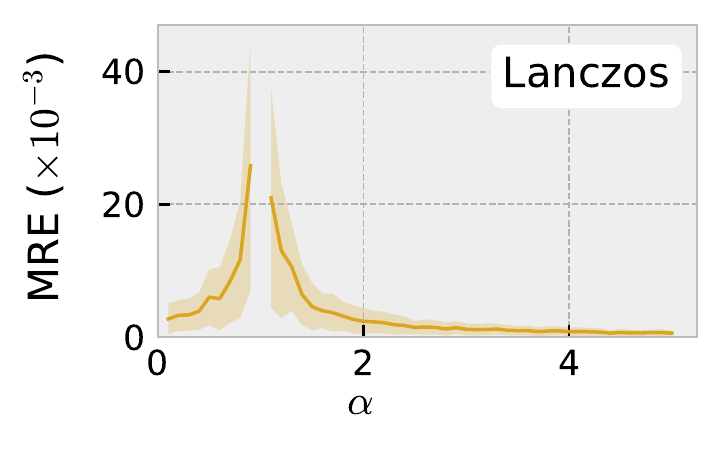}
    \caption{$\alpha$ vs. MRE curves for fractional $\alpha$-order algorithms.}
    \label{AlphaExp}
\end{figure}

Next, we test the approximation precision of Algorithms \ref{alg:Taylor}, \ref{alg:ChebyAlgo} and  with or without block low-rank kernel approximation with polynomial degree $m=30$. For Algorithm \ref{alg:LanczosAlgo}, we use the naive implementation where the Lanczos vectors are reorthogonalized every iteration with $m=15$, since it achieves a faster convergence rate. Again, the number of random Gaussian vectors $s$ ranges from $20$ to $200$ in increments of $20$, and the results are shown in Figure \ref{NonIntExp}. It can be observed that the Lanczos method runs slower than Taylor or Chebyshev. For a wide range of different $\alpha$ values, all of the three methods produce similar MRE. Besides, the block low-rank methods save half of the running time while only introducing negligible error in the approximation.

\begin{figure*}[!t]
    \centering
    \includegraphics[width= 0.8\textwidth]{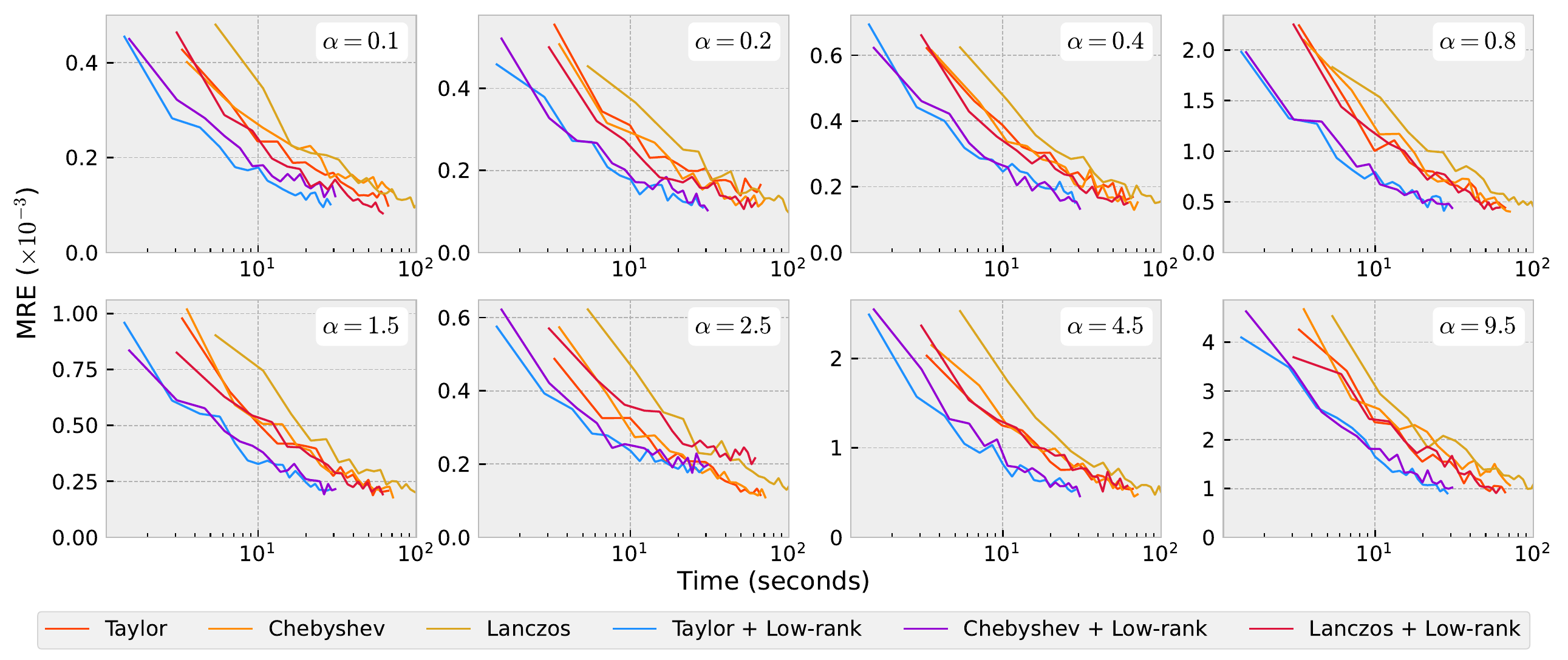}
    \caption{Time vs. MRE curves for fractional $\alpha$-order R\'enyi's entropy estimation.}
    \label{NonIntExp}
\end{figure*}

Next, we demonstrate that the proposed algorithms are applicable to different kinds of kernel functions. Considering the prerequisite that matrix-based R\'enyi’s entropy supports only infinitely divisible kernels, we choose the following widely-used ones to constitute our benchmark:
\begin{itemize}
    \item Polynomial kernel: $\varphi(\x_i, \x_j) = (\x_i^\top \x_j + r)^p$ with $r=1$ and $p=2, 4$.
    \item Gaussian kernel: $\varphi(\x_i, \x_j) = \exp(-\| \x_i - \x_j \|_2^2/2\sigma^2)$ with $\sigma=0.5, 1$.
\end{itemize}
We set $\alpha = 1.5$ and $m = 50$, while $s$ ranges from $10$ to $100$ in increments of $10$. We keep the previous settings and for Gaussian kernel, we add further comparisons with block low-rank approximation due to its shift-invariant property. The evaluation results are presented in Figure \ref{KernelExp}. It can be seen that all methods achieve consistent approximations under different kernel settings, which proves that our algorithm can accommodate different kernel functions.

\begin{figure}[t]
    \centering
    \includegraphics[width=0.45\textwidth]{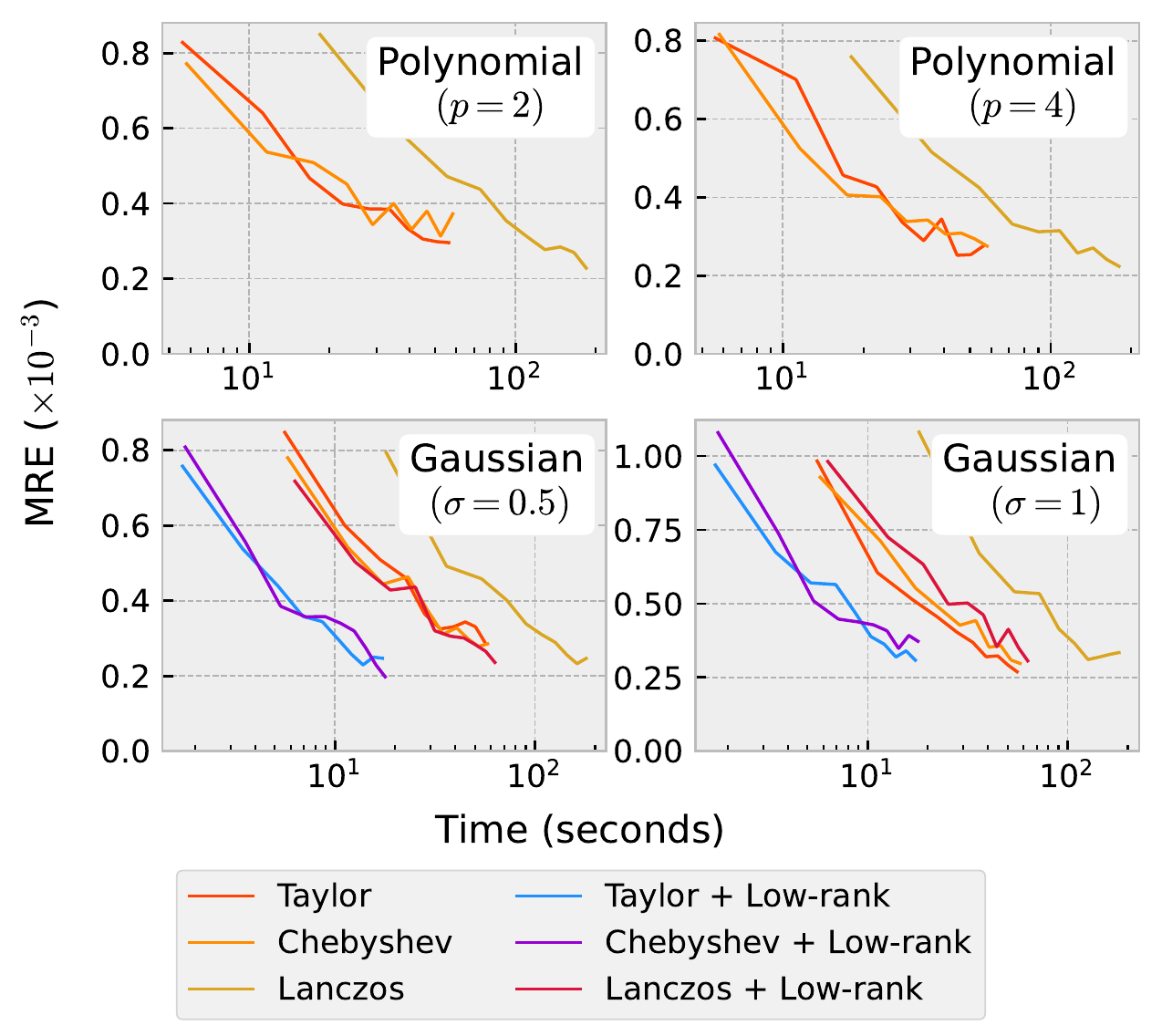}
    \caption{Time vs. MRE curves evaluated on different kernels.}
    \label{KernelExp}
\end{figure}

\subsubsection{Effect of Block Low-rank Approximation}
Lastly, we examine the effect of block low-rank kernel approximation with different number of clusters $c$ and different ranks $k$. We use the Gaussian kernel same as above with $\alpha = 2.5$, $s = 100$ and $m = 40$. To evaluate the impact of $c$ and $k$ respectively, we take a grid search for $c$ varying from $2$ to $20$ with step $2$ and $k$ varying from $10$ to $100$ with step $10$. The results are shown in Figure \ref{ClusterExp} using Chebyshev approximation. It can be seen that with reasonably high $c$ and $k$ values, block low-rank approximation achieves nearly the same accuracy as the original method, while saving nearly half of running time. This phenomenon demonstrates that our approximations can reach a satisfactory trade-off in a suitable range of both $c$ and $k$. The result also provides some insights on how to appropriately select the most suitable approximation in real applications: the running time is decreasing with $c$ at first along with the decrease of MRE, which is due to the reduced complexity $\mathcal{O}(n^2/c+nck)$ of matrix-vector multiplication. In this sense, we recommend a range of $c$ between $10$ and $20$ and a range of $k$ between $50$ and $100$ to balance running time and approximation precision.

Finally, we evaluate such a hyper-parameter combination for a $50000 \times 50000$ Gram matrix. ``Chebyshev + low-rank'' approximations take less than $8$ minutes to obtain the entropy value, whereas the original estimator requires more than $8$ hours. This further validates the remarkable performance of our approximation algorithms.

\begin{figure}[t]
    \centering
    \includegraphics[width=0.5\textwidth]{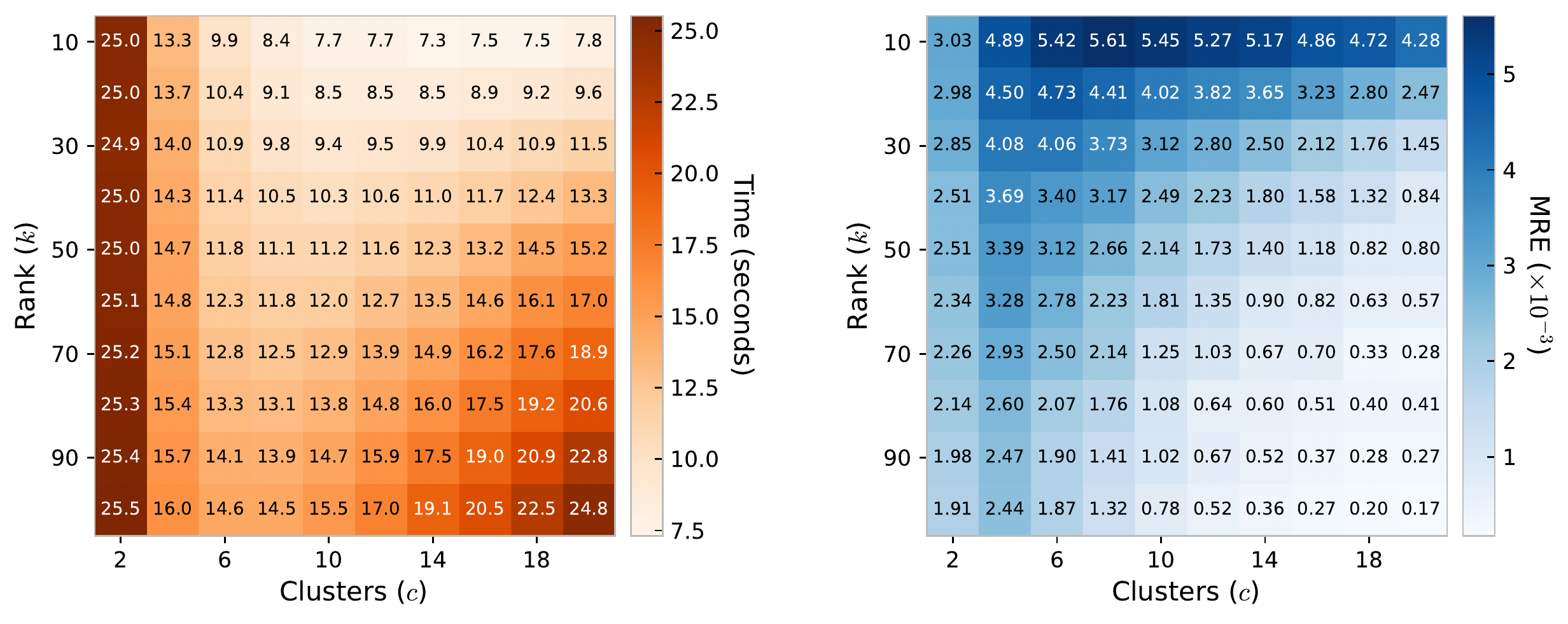}
    \caption{Impact of $c$ and $k$ on Time and MRE in block low-rank kernel approximation. For comparison, the Chebyshev algorithm without low-rank approximation yields MRE $0.19 \times 10^{-3}$ in $46.9$ seconds.}
    \label{ClusterExp}
\end{figure}

\subsection{Real-world Data Studies}\label{sec:real_world}
Following the Venn diagram relation for Shannon entropies~\cite{yeung1991new}, the matrix-based R\'enyi's $\alpha$-order mutual information and total correlation are defined accordingly as shown Eqs.~(\ref{eq:Renyi_MI}) and (\ref{eq:Renyi_TC}). In real-world scenarios, these extended information quantities have much more widespread applications including feature selection, dimension reduction and information-based clustering. In this section, we apply our algorithms on three representative information-based learning tasks, namely information bottleneck, multi-view information bottleneck and feature selection, to demonstrate the acceleration effect of both matrix-based R\'enyi's entropy and mutual information. We select Gaussian kernel with $\sigma=1$ and $\alpha=2$ in the following experiments for simplicity.

\subsubsection{Information Bottleneck for Robust Deep Learning}
The Information Bottleneck (IB) objective was firstly introduced by \cite{tishby1999information} and has recently achieved great success in preventing overfitting in deep network training \cite{yu2021deep, ardizzone2020training, wu2020graph}. IB aims to learn a representation $\T$ by maximizing $\I(\Y, \T)$ and minimizing $\I(\X, \T)$ simultaneously, forcing the network to ignore irrelevant information of $\X$ about $\Y$ and thus improving both robustness and generalization. IB is formulated as:
\begin{equation}\label{eq:IB_Lagrangian}
    \mathcal{L}_{IB} = \I(\Y, \T) - \beta\I(\X, \T).
\end{equation}

In deep neural networks, $\X$ denotes the input variable, $\Y$ denotes the desired output (e.g., class labels) and $\T$ refers to the latent representation of one hidden layer. Usually, this can be done by optimizing the IB Lagrangian (Eq.~(\ref{eq:IB_Lagrangian})) via a classic cross-entropy (CE) loss regularized by a differentiable mutual information term $\I(\X;\T)$. In practice, $\I(\X;\T)$ can be measured by variational approximations \cite{alemi2016deep, kolchinsky2019nonlinear}, mutual information neural estimator (MINE) \cite{belghazi2018mutual, elad2019direct} or the matrix-based R\'enyi's entropy \cite{yu2021deep}. 

\begin{table*}[t]
    \centering
    \caption{Classification accuracy (\%) against adversarial attack, and time spent on calculating IB objective (left) and training network (right) for different methods on CIFAR-10 (the second-best performances are underlined).}
    \label{IBRes}
    \begin{tabular}{ l||ccccccc|c }
    	\hline
    	\multirow{2}{*}{Method} & \multicolumn{7}{c|}{Perturbation Amount ($\epsilon$)} & \multirow{2}{*}{Time (h)} \\
    	\cline{2-8}
    	& 0.00 & 0.05 & 0.10 & 0.15 & 0.20 & 0.25 & 0.30 & \\
    	\hline
    	Cross Entropy & 93.10 & 63.19 & 55.82 & 51.51 & 45.90 & 41.60 & 38.61 & \quad - \enspace / 2.27 \\
    	VIB \cite{alemi2016deep} & 93.77 & 65.33 & 58.14 & 53.02 & 47.93 & 44.01 & 40.37 & 0.03 / 2.30 \\
    	NIB \cite{kolchinsky2019nonlinear} & 94.07 & 68.45 & 61.51 & 56.32 & 52.71 & 46.09 & 39.85 & 0.11 / 2.38 \\
    	RIB \cite{yu2021deep} ($\alpha=1.01$) & \textbf{94.36} & \textbf{69.68} & \textbf{62.45} & \textbf{57.76} & \underline{52.92} & \underline{47.93} & \underline{41.78} & \multirow{2}{*}{1.13 / 3.40} \\
    	RIB \cite{yu2021deep} ($\alpha=2$) & 94.28 & \underline{69.48} & \underline{61.64} & 56.89 & 52.79 & 47.56 & 41.19 & \\
    	\hline
    	ARIB ($\alpha=2$) & \underline{94.30} & 69.33 & 61.47 & \underline{57.08} & \textbf{52.97} & \textbf{48.41} & \textbf{43.19} & 0.23 / 2.50 \\
    	\hline
    \end{tabular}
\end{table*}

Following the experiment settings in \cite{alemi2016deep, yu2021deep}, we select VGG16 \cite{simonyan2015very} as the baseline network and CIFAR-10 as the classification dataset. We choose the last fully connected layer ahead softmax as the bottleneck layer. All models are trained for $400$ epochs with $100$ batch size where the learning rate is $0.1$ initially and reduced by a factor of $10$ every $100$ epochs. We compare the performance of different IB methods, including Variational IB (VIB) \cite{alemi2016deep}, Nonlinear IB (NIB) \cite{kolchinsky2019nonlinear}, matrix-based R\'enyi's IB (RIB) \cite{yu2021deep} and our Approximated RIB (ARIB). We follow the suggestion of \cite{yu2021deep} and set $\beta = 0.01$. For ARIB, we set number of random vectors $s=20$. To test the robustness of different methods against adversarial attacks, we adopt the Fast Gradient Sign Attack (FGSM) criterion, in which the adversarial examples are generated by:
\begin{equation*}
    \hat{x} = x + \epsilon \cdot sign(\nabla_x \mathcal{L}(\theta, x, y)),
\end{equation*}
where $x$ denotes the original input, $\hat{x}$ denotes the attack input, $\mathcal{L}(\theta, x, y)$ is the loss with respect to the input $x$ and $\epsilon$ is the perturbation amount. The final classification accuracy and time spent on network training are reported in Table \ref{IBRes}. In \cite{yu2021deep}, the authors choose $\alpha = 1.01$ to recover Shannon's entropy, while the performance with $\alpha = 2$ was not tested. For completeness, we compare the performance of RIB with $\alpha=1.01$ and $\alpha = 2$, which shows that $\alpha = 2$ is also capable of high performance. The final results demonstrate that our ARIB achieves remarkable speedup without sacrificing prediction ability compared to directly applying RIB. In particular, our approximation methods lead to $5$ times accelerating effect, making the training time comparable to VIB or NIB.

\subsubsection{Information Bottleneck for Multi-view Image Classification}
Recently, the IB principle has been extended to multi-view learning tasks to minimize redundant information between multiple input views~\cite{wang2019deep,aguerri2019distributed}. Given different views of features $\X_1, \cdots, \X_k$ and the target label $\Y$, a typical multi-view deep network contains $k$ separate encoders, each maps one specific view of input data to a latent representation $\T_i$, $i \in [1, k]$. These intermediate representations are then fused to a joint one $\Z$ by the classifier to acquire the final classification result. To this end, the multi-view IB objective can be defined as~\cite{wang2019deep}:
\begin{equation*}
    \mathcal{L}_{MIB} = \I(\Y;\Z) - \sum_{i=1}^k \beta_i \I(\X_i;\Z_i),
\end{equation*}
where $\beta_i$, $i \in [1,k]$ balances the trade-off between representability and robustness in the $i$-th view.

In this experiment, we follow the settings in \cite{zhang2022multi} and choose MNIST as our benchmark, where the $k = 2$ views are constructed by randomly rotating input images with an angle in $[\pi/4, \pi/4]$, and adding $[-0.1,0.1]$ uniformly distributed background noise after normalization. The classification network is built upon an MLP of form $512$-$512$-$512$ for each view, $512$-$256$-$10$ for classifier and ReLU activation layers. All models are trained for $60$ epochs with $100$ batch size. We compare the performance of matrix-based R\'enyi's multi-view IB (MEIB) \cite{zhang2022multi}, our approximated MEIB (AMEIB) and Deep Multi-view IB (Deep IB) \cite{wang2019deep}. We select the values of $\beta_1 = 0.001$ and $\beta_2 = 0.01$ through cross-validation. The final results of classification accuracy and training time are reported in Table \ref{MIBRes}. Again, it can be seen that with our approximation methods, we achieved a $3$ times speedup, which is in the same level of training time compared to Deep IB, while bringing negligible drop in classification accuracy. This improvement could be further enlarged with larger batch sizes, which is a common choice in modern fine-tuning techniques \cite{samuel2018dont}.

\begin{table}[t]
    \centering
    \caption{Classification accuracy for multi-view image classification, and time spent on calculating IB objective (left) and training network (right) for different methods on MNIST (the second-best performances are underlined).}
    \label{MIBRes}
    \begin{tabular}{ l||cc }
    	\hline
    	Method & Accuracy (\%) & Time (h) \\
    	\hline
    	Cross Entropy & 96.93 $\pm$ 0.10 & \quad - \enspace / 1.22 \\
    	Deep IB \cite{wang2019deep} & 97.22 $\pm$ 0.14 & 0.06 / 1.28 \\
    	MEIB \cite{zhang2022multi} ($\alpha=1.01$) & \textbf{97.50 $\pm$ 0.08} & \multirow{2}{*}{0.21 / 1.43} \\
    	MEIB \cite{zhang2022multi} ($\alpha=2$) & 97.46 $\pm$ 0.07 & \\
    	\hline
    	AMEIB ($\alpha=2$) & \underline{97.47 $\pm$ 0.13} & 0.07 / 1.29 \\
    	\hline
    \end{tabular}
\end{table}

\subsubsection{Mutual Information for Feature Selection}
Given a set of features $S = \{\X_1, \cdots, \X_n\}$, the feature selection task aims to find the smallest subset of $S$ while trying to maximize its relevance about the labels $\Y$. In the view of information theory, the goal is to maximize $\I(\{\X_{i_1}, \cdots, \X_{i_k}\};\Y)$, where $i_1, \cdots, i_k$ indicate the selected features. However, the curse of high dimension causes this objective extremely hard to estimate. By employing the extension of multivariate matrix-based R\'enyi's mutual information \cite{yu2019multivariate}, we are finally granted direct measure to this information quantity. We evaluate the performance of matrix-based R\'enyi's mutual information (RMI) and RMI approximated by our algorithms (ARMI) on 8 well-known classification datasets used in previous works \cite{uci, koklu2020multiclass, yu2019multivariate, vinh2016can} which cover a wide range of instance-feature ratio, number of classes and data source domains, as shown in Table \ref{FeatSelData}. For completeness, we compare with state-of-the-art information based feature selection methods, including Mutual Information-based Feature Selection (MIFS) \cite{battiti1994using}, First-Order Utility (FOU) \cite{brown2009new}, Mutual Information Maximization (MIM) \cite{lewis1992feature}, Maximum-Relevance Minimum-Redundancy (MRMR) \cite{peng2005feature}, Joint Mutual Information (JMI) \cite{yang1999data} and Conditional Mutual Information Maximization (CMIM) \cite{fleuret2004fast}.

\begin{table}[t]
    \centering
    \setlength{\tabcolsep}{0.8mm}
    \caption{Datasets used in feature selection tasks, and the total running time of RMI and ARMI.}
    \label{FeatSelData}
    \begin{tabular}{ l||cccc|cc }
    	\hline
    	\multirow{2}{*}{Dataset} & \multirow{2}{*}{\# Instance} & \multirow{2}{*}{\# Feature} & \multirow{2}{*}{\# Class} & \multirow{2}{*}{Type} & \multicolumn{2}{c}{Running Time (h)} \\
    	\cline{6-7}
    	&&&&& RMI & ARMI \\
    	\hline
    	Covid & 5434 & 21 & 2 & discrete & 2.56 & 0.47 \\
    	Optdigits & 5620 & 65 & 10 & discrete & 10.68 & 1.25 \\
    	Statlog & 6435 & 37 & 6 & discrete & 8.48 & 1.71 \\
    	Gesture & 11674 & 65 & 4 & discrete & 86.03 & 8.19 \\
    	Spambase & 4601 & 57 & 2 & continuous & 4.88 & 1.15 \\
    	Waveform & 5000 & 40 & 3 & continuous & 4.43 & 0.77 \\
    	Galaxy & 9150 & 16 & 2 & continuous & 8.64 & 1.29 \\
    	Beans & 13611 & 17 & 7 & continuous & 30.35 & 3.11 \\
    	\hline
    \end{tabular}
\end{table}

In our experiments, continuous features are discretized into $5$ bins by equal-width strategy \cite{vinh2014reconsidering} for non-R\'enyi methods. We choose the Support Vector Machine algorithm implemented by libSVM \cite{chang2011libsvm} with RBF kernel ($\sigma=1$) as the classifier and employ 10-fold cross-validation. In our observation, classification error tends to stop decreasing after incrementally selecting the first $k=8$ features, so we select the first $8$ most informative features step by step through a greedy criterion and report the best validation error achieved by each method. Experiment results are shown in Table \ref{FeatSel} in terms of classification error, with corresponding running time reported in Table \ref{FeatSelData} for RMI and ARMI. As we can see, ARMI achieves $5$ to $10$ times speedup compared to original RMI on all datasets while introducing only less than $1\%$ error in classification experiments, which is still significantly lower than other methods in comparison. This demonstrates the great potential of the proposed approximating algorithms in machine learning applications.  It is worth noting that the Galaxy dataset contains an extremely informative feature named "redshift", which leads to higher than $95\%$ classification accuracy solely in identifying the class of a given star. Both RMI and ARMI can pick it out in the first place, but other methods even fail to choose it as the second high-weighted candidate.

\begin{table*}[!t]	
    \centering
    \caption{Feature selection results in terms of classification error (\%).}
    \label{FeatSel}
    \begin{tabular}{ l||cccccccc|c }
    	\hline
    	\multirow{2}{*}{Method} & \multicolumn{8}{c|}{Dataset} & \multirow{2}{*}{Average Ranking} \\
    	\cline{2-9}
    	\multirow{2}{*}{} & Covid & Optdigits & Statlog & Gesture & Spambase & Waveform & Galaxy & Beans & \\
    	\hline
    	MIFS \cite{battiti1994using} & 5.96 & 9.13 & 13.07 & 39.67 & 22.06 & 25.04 & 12.85 & \textbf{7.07} & 5.5 \\
    	FOU \cite{brown2009new} & 3.59 & 14.04 & 12.32 & 39.34 & 19.17 & 20.34 & 12.86 & 7.69 & 5.0 \\
    	MIM \cite{lewis1992feature} & 4.31 & 12.49 & 13.58 & 33.70 & 9.93 & 20.84 & 0.74 & 9.10 & 5.3 \\
    	MRMR \cite{peng2005feature} & 4.36 & 9.13 & 12.91 & 34.42 & 9.87 & 16.66 & 12.87 & 8.26 & 4.1 \\
    	JMI \cite{yang1999data} & 4.32 & 11.01 & 13.36 & 39.03 & 9.98 & \textbf{16.18} & 0.72 & 8.96 & 4.5 \\
    	CMIM \cite{fleuret2004fast} & 4.4 & \textbf{8.43} & 13.77 & 40.30 & \textbf{9.80} & 17.3 & 5.98 & 7.59 & 4.6 \\
    	\hline
    	RMI \cite{yu2019multivariate} & \textbf{3.86} & 9.50 & \textbf{13.35} & 20.28 & 9.82 & 19.98 & \textbf{0.51} & 7.16 & \textbf{2.8} \\
    	ARMI (Ours) & 4.36 & 8.91 & 13.36 & \textbf{20.06} & 10.71 & 20.98 & \textbf{0.51} & 7.32 & 3.8 \\
    	\hline
    \end{tabular}
\end{table*}

\section{Conclusion}
In this paper, we develop computationally efficient approximations for matrix-based R\'enyi's entropy. From the viewpoint of randomized linear algebra, we design random Gaussian approximation for integer-order matrix-based R\'enyi's entropy and use Taylor and Chebyshev series to approximate fractional matrix-based R\'enyi's entropy. Moreover, we exploit the structure of kernel matrices to design a memory-saving block low-rank approximation for calculating the matrix-based R\'enyi's entropy. Statistical guarantees are established for the proposed approximation algorithms. We also demonstrate their practical usage in real-world applications including feature selection and (multi-view) information bottleneck. In the future, we will try to acquire tighter theoretical upper bounds, and establish theoretical lower bounds for matrix-based R\'enyi's entropy. We will also continue to explore more practical applications of our fast matrix-based R\'enyi's entropy in both machine learning and neuroscience.

\section*{Acknowledgments}
This work has been supported by the Key Research and Development Program of China under Grant 2021ZD0110700; The National Natural Science Foundation of China under Grant 62106191; The Key Research and Development Program of Shaanxi Province (2021GXLH-Z-095); The Innovative Research Group of the National Natural Science Foundation of China (61721002); The consulting research project of the Chinese Academy of Engineering (The Online and Offline Mixed Educational Service System for The Belt and Road Training in MOOC China); Project of China Knowledge Centre for Engineering Science and Technology; The innovation team from the Ministry of Education (IRT\_17R86).

\appendix

\section*{Supplementary Experiment Results}
Additionally, we report the results for integer and fractional $\alpha$-order R\'enyi's entropy estimation with 3D plots in Fig. \ref{IntExp3D} and \ref{NonIntExp3D} for a more intuitive comparison.

\begin{figure}[t]
    \centering
    \includegraphics[scale=0.6]{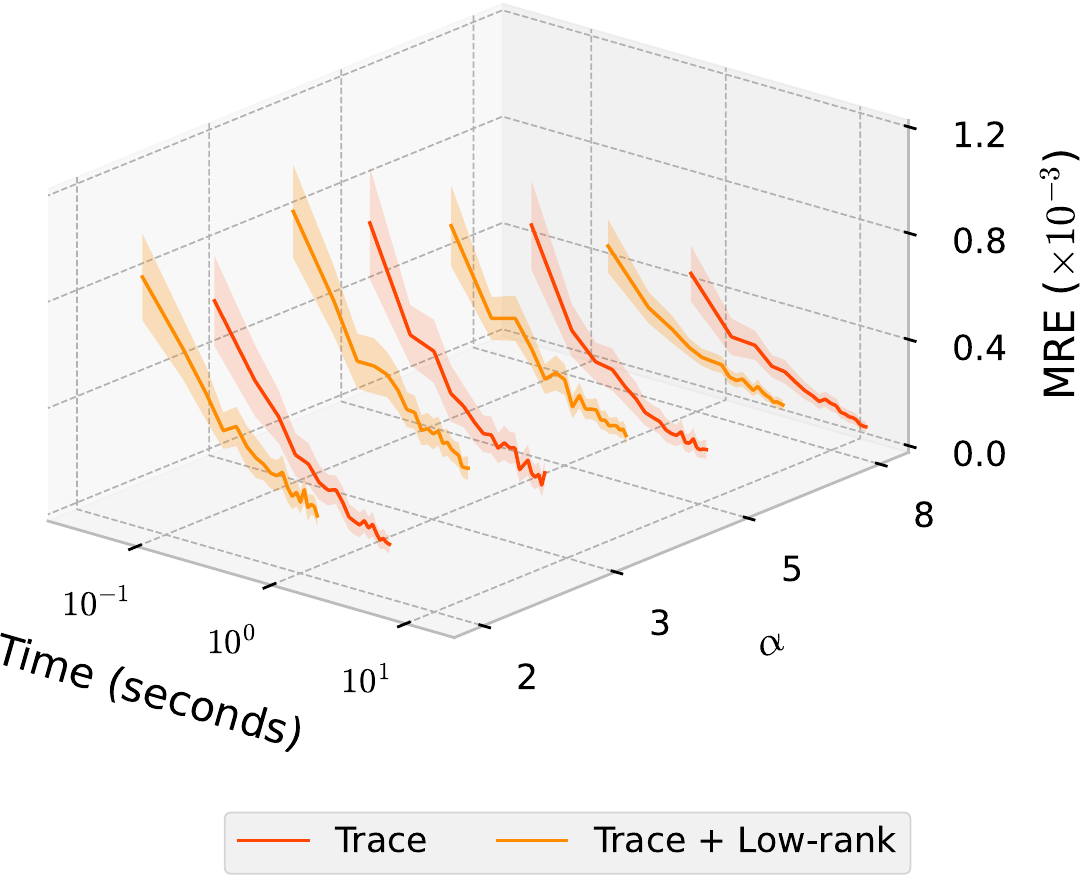}
    \caption{Time vs. MRE curves for integer $\alpha$-order R\'enyi's entropy estimation.}
    \label{IntExp3D}
\end{figure}

\begin{figure*}[t]
    \centering
    \includegraphics[scale=0.6]{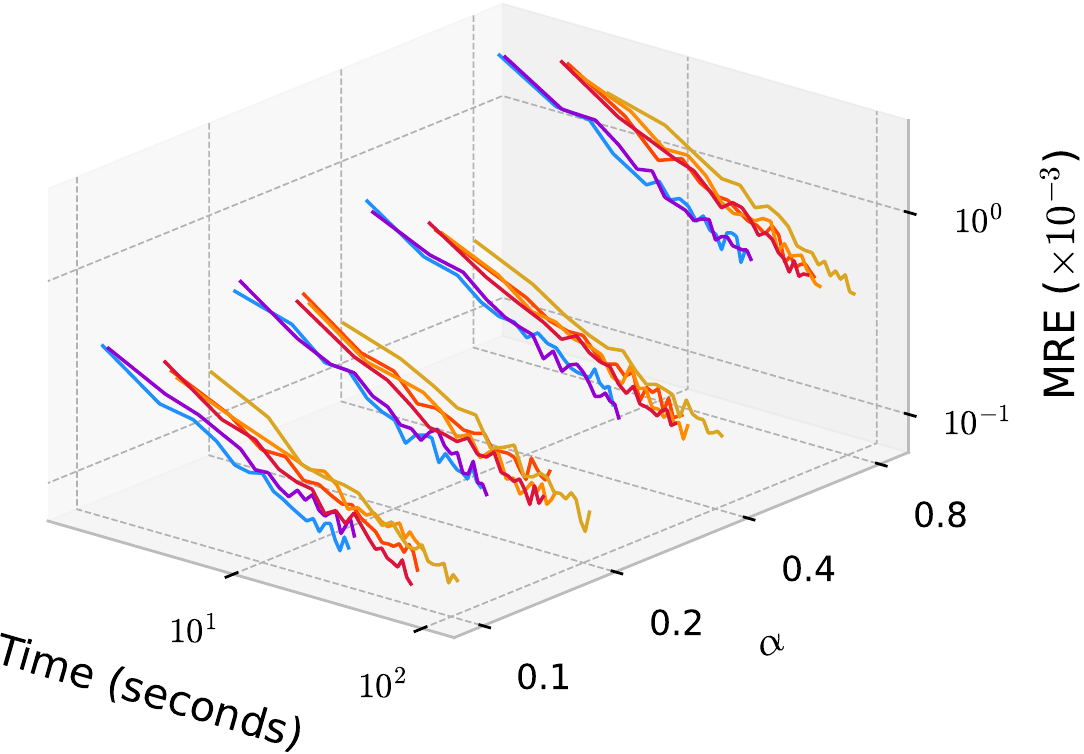}
    \includegraphics[scale=0.6]{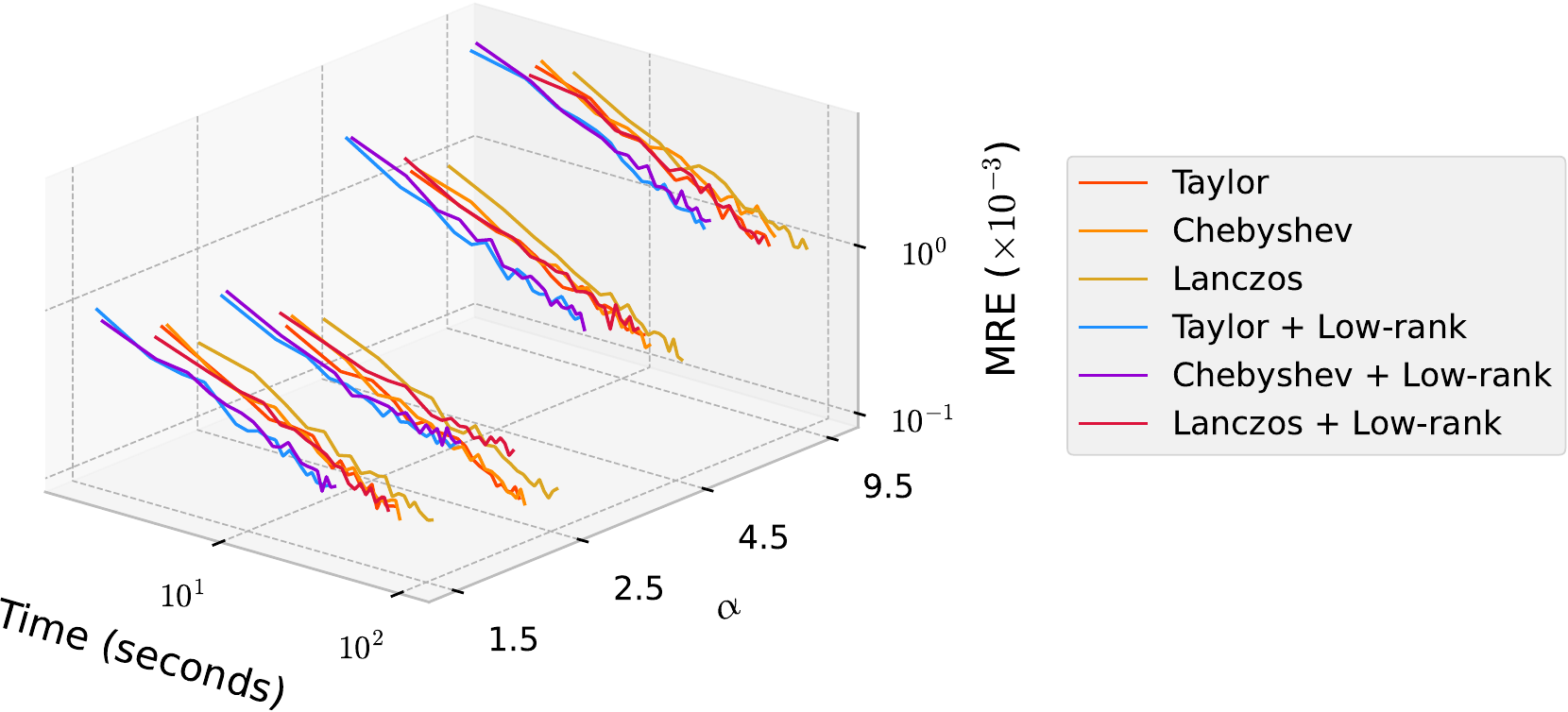}
    \caption{Time vs. MRE curves for fractional $\alpha$-order R\'enyi's entropy estimation.}
    \label{NonIntExp3D}
\end{figure*}

\subsection{Low-rank Approximation}
We then evaluate the performance of the proposed block low-rank approximation method compared with other low-rank approximation algorithms. We use the strategy above to generate the kernel matrix $G$ of size $1,000 \times 1,000$ with varying $\sigma$ in Gaussian kernel. For a fair comparison, we fix $c = 10$ and let $k$ varies from $10$ to $100$, and set the rank for other low-rank approximation methods as $q = c \cdot k$. We compare with the widely-adopted Nystr\"om method \cite{williams2000using} (Nystr\"om) and the best rank-$q$ approximation of $G$ obtained by SVD (SVD). The mean relative error (MRE) achieved for $\alpha = 1.5$ is shown in Figure \ref{LowRankExp}. As can be seen, our block low-rank approach achieves significantly higher accuracy than the Nystr\"om method, and is competitive with the best rank-$q$ approximation. This verifies the advantage of using our block low-rank method to approximate matrix-based R\'enyi's entropy.

\begin{figure}[t]
    \centering
    \includegraphics[width=0.48\textwidth]{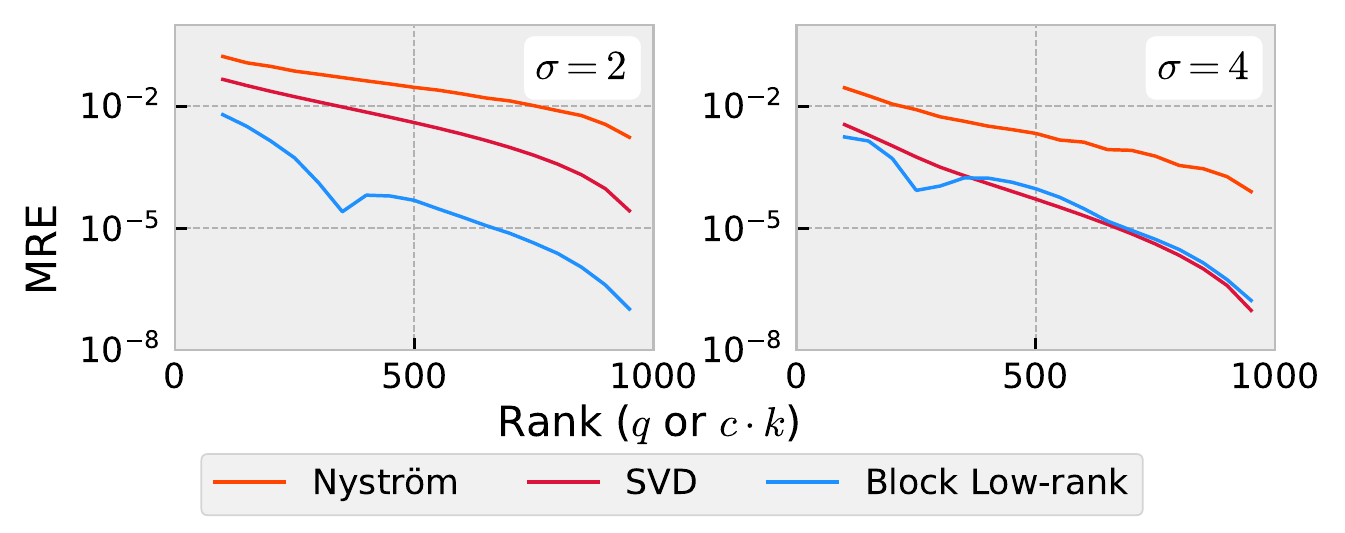}
    \caption{Rank vs. MRE curves for different low-rank approximation methods.}
    \label{LowRankExp}
\end{figure}

\subsection{Rank Deficient Approximation}
We further test the performance of Chebyshev and Lanczos approaches for fractional $\alpha$-order entropy estimation with $\lambda_{min} = 0$ and $\alpha = 0.5$, for which there exists a lower bound for $\epsilon$. To generate rank-deficient kernel matrices, we use the polynomial kernel $\varphi(\x_i, \x_j) = (\x_i^\top \x_j + 1)^2$ and set the dimension of data points $d = 138$, resulting in nearly $3\%$ of the eigenvalues being zero. We keep the other hyper-parameters the same as our previous fractional $\alpha$-order experiment, and the final Time vs. MRE curves are shown in Fig. \ref{DeficientExp}. As we can see, our approximation algorithms can still achieve satisfactory level of approximation accuracy even under this extreme circumstance.

\begin{figure}[t]
    \centering
    \includegraphics[scale=0.6]{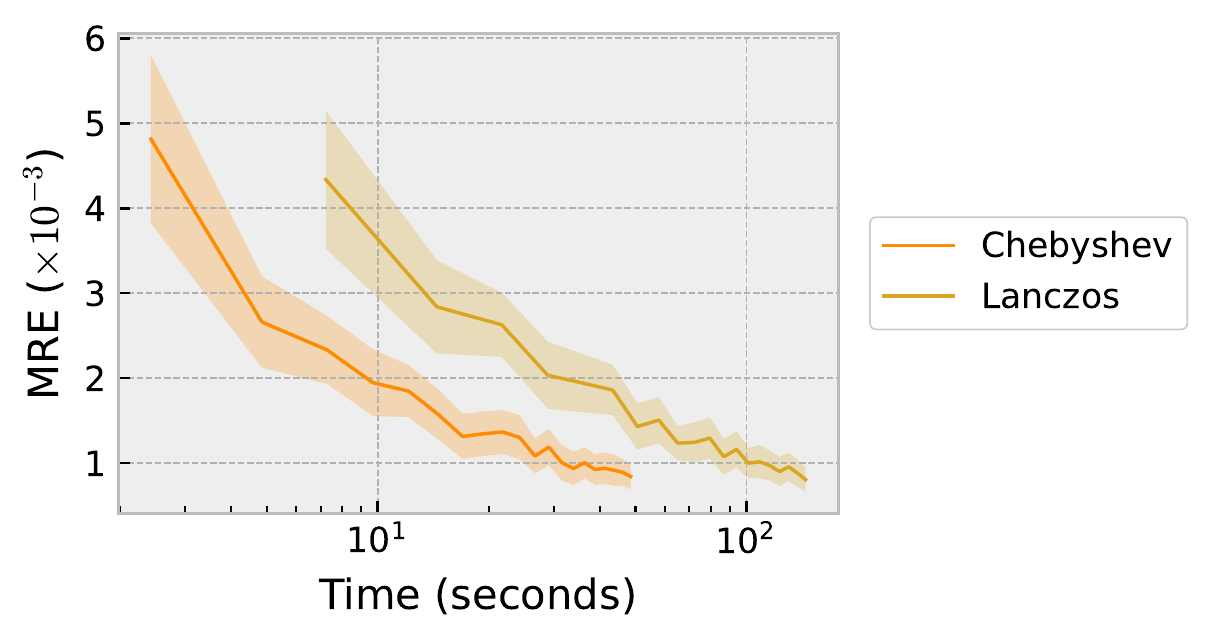}
    \caption{Time vs. MRE curves for rank deficient R\'enyi's entropy estimation.}
    \label{DeficientExp}
\end{figure}

\section*{Proof of Theorem \ref{th:ChebyBound} and \ref{th:LanczosBound}}
\subsection{Proof of Theorem \ref{th:ChebyBound}}

\begin{proof}
    Chebyshev series of the first kind is defined as $T_n(x) = \cos(n\arccos x)$ for $n \in \mathbb{N}$ on interval $[-1,1]$. Since the eigenvalues of $G$ all lie in $[0,\lambda_{max})$, the affine transformation
    \begin{align*}
    	F(x)=\frac{\lambda_{max}}{2}(x+1), \qquad \hat{T}_n(x)=T_n \circ F.
    \end{align*}
    allows us to define Chebyshev series on interval $[0,\lambda_{max}]$. The coefficients of Chebyshev series $\hat{T}_n$ can be calculated by:
    \begin{align*}
    	c_n &= \frac{2}{\pi} \int_0^\pi F^\alpha(cos\theta) \cos(n\theta) \dif\theta \\
    	&= \frac{2}{\pi} \int_0^\pi \prn*{\frac{\lambda_{max}}{2}(\cos\theta+1)}^\alpha \cos(n\theta) \dif\theta \\
    	&= \frac{2\lambda_{max}^\alpha \Gamma(\alpha+\frac{1}{2}) (\alpha)_n}{\sqrt{\pi} \Gamma(\alpha+1) (\alpha+n)_n},
    \end{align*}
    where $(\alpha)_n$ is the falling factorial: $(\alpha)_n = \alpha \cdot(\alpha-1)\cdot ... \cdot(\alpha-n+1)$. Let $f_m(x) = c_0/2 + \sum_{i=1}^m c_i \hat{T}_i(x)$ denote the Chebyshev series with degree $m$, then for each eigenvalue $\lambda$ of $G$:
    \begin{align}
    	&\quad\abs{\lambda^\alpha - f_m(\lambda)} = \abs*{\sum_{i=m+1}^\infty c_i \tilde{T}_i(\lambda)} \nonumber\\
    	&\le \sum_{i=m+1}^\infty \abs{c_i} = \sum_{i=m+1}^\infty \abs*{\frac{2\lambda_{max}^\alpha \Gamma(\alpha + \frac{1}{2}) (\alpha)_i}{\sqrt{\pi} \Gamma(\alpha + 1) (\alpha+i)_i}} \label{ChebyProp1}\\
    	&= \frac{2\lambda_{max}^\alpha}{\sqrt{\pi}}\sum_{i=m+1}^\infty \abs*{\frac{\Gamma(\alpha + \frac{1}{2}) \Gamma(\alpha + 1)}{\Gamma(\alpha + i + 1) \Gamma(\alpha - i + 1)}} \nonumber\\
    	&\le \frac{2\lambda_{max}^\alpha}{\sqrt{\pi}}\sum_{i=m+1}^\infty \abs*{\frac{\Gamma(\alpha + \frac{1}{2}) \Gamma(\alpha + 1)}{\Gamma(i-\alpha)\Gamma(\alpha-i+1) (i-\alpha)^{2\alpha+1}}} \label{GammaIneq2}\\
    	&\le \frac{2\lambda_{max}^\alpha \Gamma(\alpha + \frac{1}{2}) \Gamma(\alpha + 1)}{\pi^{3/2}}\sum_{i=m+1}^\infty \abs*{\frac{1}{(i-\alpha)^{2\alpha+1}}} \label{GammaProp2}\\
    	&\le \frac{2\lambda_{max}^\alpha \Gamma(\alpha + \frac{1}{2}) \Gamma(\alpha + 1)}{\pi^{3/2}} \int_m^\infty \frac{1}{(x-\alpha)^{2\alpha+1}} \dif x \label{IntPoly}\\
    	&= \frac{2\lambda_{max}^\alpha \Gamma(\alpha + \frac{1}{2}) \Gamma(\alpha + 1)}{\pi^{3/2}} \frac{1}{2\alpha(m-\alpha)^{2\alpha}} \nonumber\\
    	&= \frac{\lambda_{max}^\alpha \Gamma(\alpha + \frac{1}{2}) \Gamma(\alpha)}{\pi^{3/2} (m-\alpha)^{2\alpha}}. \nonumber
    \end{align}
    (\ref{ChebyProp1}) follows by noticing that $\hat{T}_n(x) \in [-1,1]$ for any $x \in [0,\lambda_{max}]$. (\ref{GammaIneq2}) follows by applying Lemma \ref{GammaIneq} on $R(i-\alpha,2\alpha+1)$ similar to (\ref{GammaIneq1}). (\ref{GammaProp2}) follows by Euler's reflection formula similar to (\ref{GammaProp1}). (\ref{IntPoly}) follows by noticing that $n^{-k} \le \int_{n-1}^n x^{-k} \dif x$ for $n > 1$ and $k > 1$. If $G$ has full rank, i.e, $\lambda_{min} > 0$, then by choosing $m$ as:
    \begin{equation*}
    	m \ge \alpha + \sqrt{\kappa} \sqrt[2\alpha]{\frac{\Gamma(\alpha + \frac{1}{2}) \Gamma(\alpha)}{\epsilon\pi^{3/2}}},
    \end{equation*}
    we have $\abs{\lambda^\alpha - f_m(\lambda)} \le \epsilon \lambda_{min}^\alpha \le \epsilon \lambda^\alpha$, which yields the same upper bound as (\ref{PolyBound}).
    
    Similarly, if $G$ is rank deficient, i.e, $\lambda_{min} = 0$, by choosing $m$ as:
    \begin{equation*}
    	m \ge \alpha + \sqrt[2\alpha]{\frac{n\Gamma(\alpha + \frac{1}{2}) \Gamma(\alpha)}{\epsilon\pi^{3/2}}},
    \end{equation*}
    we have $\abs{\lambda^\alpha - f_m(\lambda)} \le \frac{\epsilon}{n} \lambda_{max}^\alpha$, which yields the same error upper bound as above.
\end{proof}

\subsection{Proof of Theorem \ref{th:LanczosBound}}

\begin{proof}
    We apply the result of \cite{ubaru2017fast} to establish the error bound of Lanczos iteration, as shown in the following lemma:
    \begin{lemma} \cite{ubaru2017fast}
        Consider a symmetric positive definite matrix $A \in \mathbb{R}^{n \times n}$ with eigenvalues in $[\lambda_{min}, \lambda_{max}]$ and condition number $\kappa = \lambda_{max}/\lambda_{min}$, and let $f$ be a function that is analytic and either positive or negative inside its interval, and whose absolute maximum and minimum values in the interval are $M_\rho$ and $m_\rho$ respectively. Let $\epsilon$, $\eta$ be constants in the interval $(0, 1)$. Then for stochastic Lanczos quadrature parameters satisfying:
        \begin{enumerate}
            \item $s \ge \frac{24}{\epsilon^2} \ln(2/\eta)$ number of random Rademacher vectors, and
            \item $m \ge \frac{\sqrt{\kappa}}{4} \log(K/\epsilon)$ number of Lanczos steps,
        \end{enumerate}
        where $K = \frac{(\lambda_{max}-\lambda_{min}) (\sqrt{\kappa}-1)^2 M_\rho}{\sqrt{\kappa} m_\rho}$, the output $\Gamma$ is such that:
        \begin{equation*}
            \mathbb{P}\brk*{|\tr(f(A)) - \Gamma| \le \epsilon |\tr(f(A))|} \ge 1-\eta.
        \end{equation*}
    \end{lemma}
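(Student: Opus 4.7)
The plan is to recognize that Algorithm \ref{alg:LanczosAlgo} is exactly a stochastic Lanczos quadrature (SLQ) estimator for $\tr(G^\alpha)$, and then invoke the SLQ error bound restated at the end of the excerpt with $f(x)=x^\alpha$. The argument then splits into three steps: (i) verify that the pseudocode implements SLQ correctly, so that the lemma applies; (ii) specialize the constants $M_\rho$, $m_\rho$, $K$ for $f(x)=x^\alpha$ and check that the stated $s$ and $m$ satisfy the lemma's hypotheses; (iii) push the resulting multiplicative trace error through the logarithm, reusing the arithmetic in the proof of Theorem \ref{IntBound}.

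For step (i), the inner loop of Algorithm \ref{alg:LanczosAlgo} constructs the orthonormal Krylov basis $\q_1,\dots,\q_m$ with $\q_1=\g_i/\norm{\g_i}$ and the symmetric tridiagonal $T$, so that the Lanczos quadrature identity $\q_1^\top G^\alpha \q_1 = \e_1^\top T^\alpha \e_1 + \text{(poly. residual)}$ holds. Taking $\p$ to be the first column of $T^\alpha$ and computing $\g_i^\top\sum_k (\p)_k \q_k$ reproduces $\norm{\g_i}^2\cdot \q_1^\top f_m(G)\q_1$ on the principal Krylov subspace; combined with the Rademacher normalization $\mathbb{E}\norm{\g_i}^2=n$, the final factor $\sqrt{n}/s$ in the returned quantity gives the Hutchinson-style average that the lemma analyzes. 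I would state this correspondence explicitly at the start.

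For step (ii), since $G$ is a normalized PSD kernel matrix, $\tr(G)=1$ and therefore $\lambda_{max}\le 1$ and $\lambda_{max}-\lambda_{min}\le 1$. The function $f(x)=x^\alpha$ is analytic and positive on $[\lambda_{min},\lambda_{max}]$ with extrema $M_\rho=\lambda_{max}^\alpha$ and $m_\rho=\lambda_{min}^\alpha$, hence $M_\rho/m_\rho=\kappa^\alpha$. Substituting into
\[
    K=\frac{(\lambda_{max}-\lambda_{min})(\sqrt{\kappa}-1)^2 M_\rho}{\sqrt{\kappa}\,m_\rho}
\]
and using $(\sqrt{\kappa}-1)^2\le \kappa$ together with $\lambda_{max}-\lambda_{min}\le 1$ yields $K\le \sqrt{\kappa}\cdot\kappa^\alpha=\kappa^{\alpha+1/2}$. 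Thus the choice $m=\lceil\tfrac{1}{4}\sqrt{\kappa}\log(\kappa^{\alpha+1/2}/\epsilon)\rceil$ implies the lemma's bound $m\ge \tfrac{\sqrt{\kappa}}{4}\log(K/\epsilon)$, and $s=\lceil 24\ln(2/\delta)/\epsilon^2\rceil$ matches the lemma's condition with $\eta=\delta$. The lemma then gives $\lvert\tilde{\tr}(G^\alpha)-\tr(G^\alpha)\rvert\le \epsilon\,\tr(G^\alpha)$ with probability at least $1-\delta$.

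For step (iii), exactly as in the proof of Theorem \ref{IntBound},
\[
    \abs*{\S_\alpha(G)-\tilde{\S}_\alpha(G)} = \abs*{\tfrac{1}{1-\alpha}\log_2\!\prn*{1-\tfrac{\tr(G^\alpha)-\tilde{\tr}(G^\alpha)}{\tr(G^\alpha)}}},
\]
and the inequality $|\log_2(1-x)|\le -\log_2(1-|x|)$ for $x\in(-1,1)$ converts the multiplicative trace error $\epsilon$ into the stated entropy bound $\abs{\tfrac{1}{1-\alpha}\log_2(1-\epsilon)}$. I expect the main obstacle to be the bookkeeping in step (i): carefully checking that the particular normalization used in the pseudocode (Rademacher vectors plus the $\sqrt{n}/s$ factor in the returned value) aligns with the quadrature model analyzed by the lemma, and confirming that the upper bound $K\le\kappa^{\alpha+1/2}$ goes through without the implicit assumption that $\lambda_{min}>0$ being violated---the rank-deficient case would need either a small-perturbation argument or an explicit disclaimer, since the SLQ lemma requires an SPD matrix.
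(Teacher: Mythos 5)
Your proposal is correct and follows essentially the same route as the paper: the quoted statement is an imported stochastic Lanczos quadrature bound from Ubaru, Chen and Saad that the paper cites without proof, and what you have written is precisely the paper's own application of it to obtain Theorem~\ref{th:LanczosBound} --- specialize $M_\rho/m_\rho = \kappa^\alpha$, use $\lambda_{max}-\lambda_{min}\le 1$ and $(\sqrt{\kappa}-1)^2\le\kappa$ to get $K\le\kappa^{\alpha+1/2}$, and then push the multiplicative trace error through the logarithm exactly as in Theorem~\ref{IntBound}. If anything you are more careful than the paper, which leaves the bound on $K$ implicit and glosses over the requirement $\lambda_{min}>0$ of the cited lemma (writing only $\lambda_{min}\ge 0$), a caveat for the rank-deficient case that you rightly flag.
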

    
    For our problem, $\lambda_{min} \ge 0$ and $\lambda_{max} \le 1$, thus $\lambda_{max} - \lambda_{min} \le 1$. The function $f$ to be approximated is the $\alpha$-power function $f(A) = A^\alpha$, therefore $M_\rho/m_\rho = \lambda_{max}^\alpha / \lambda_{min}^\alpha = \kappa^\alpha$. From the analysis above, we have the final upper bound for the number of Lanczos steps: $m = \ceil*{\frac{\sqrt{\kappa}}{4} \log(\kappa^{\alpha+\frac{1}{2}}/\epsilon)}$.
\end{proof}

\bibliographystyle{IEEEtran}
\bibliography{Entropy_TSP}

\begin{IEEEbiography}[{\includegraphics[width=1in,height=1.25in,clip,keepaspectratio]{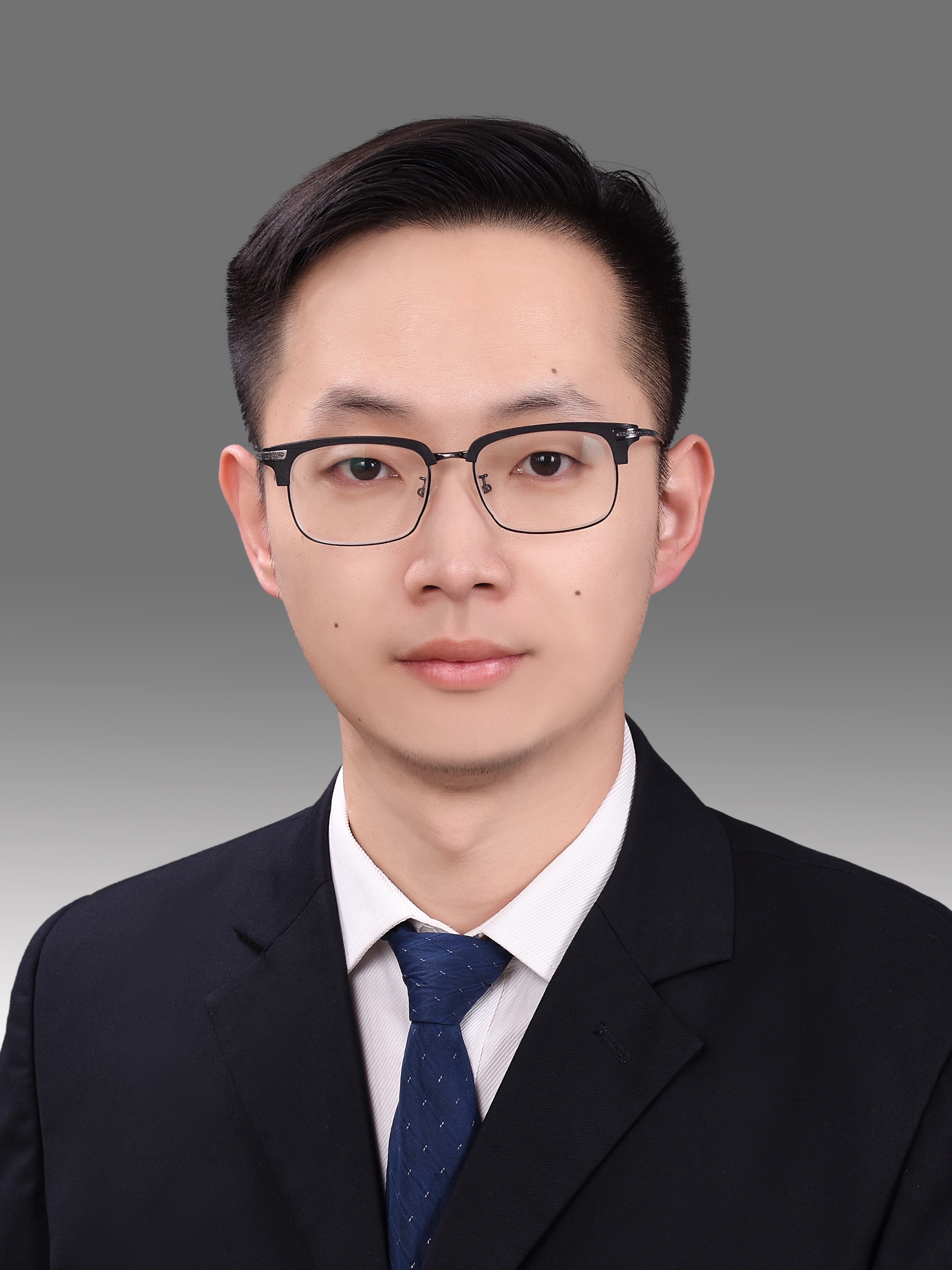}}]{Tieliang Gong}
	received the Ph.D. degree from Xi'an Jiaotong University,  Xi'an China, in 2018. From Sep 2018 to  Oct. 2020, he was a Post-Doctoral Researcher with the Department of Mathematics and Statistics, University of Ottawa, Ottawa, ON, Canada. He is currently an Assistant Professor with the School of Computer Science and Technology, Xi'an Jiaotong University, Xi'an. His current research interests include statistical learning theory, machine learning and information theory.
\end{IEEEbiography}
\begin{IEEEbiography}[{\includegraphics[width=1in,height=1.25in,clip,keepaspectratio]{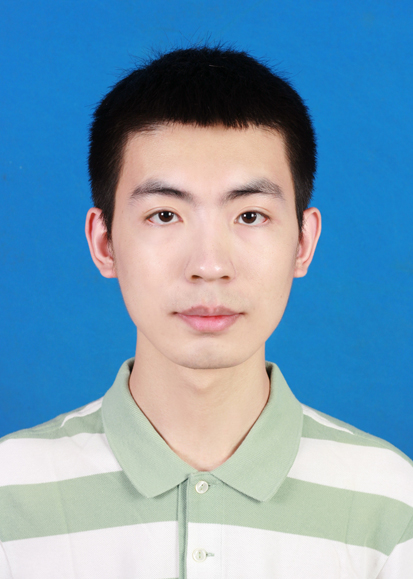}}]{Yuxin Dong}
	received the B.S. degree from Xi'an Jiaotong University, Xi'an, China in 2019. He is currently a Ph.D. student with the School of Computer Science and Technology, Xi'an Jiaotong University. His research interests include information theory, statistical learning theory and bioinformatics.
\end{IEEEbiography}
\begin{IEEEbiography}[{\includegraphics[width=1in,height=1.25in,clip,keepaspectratio]{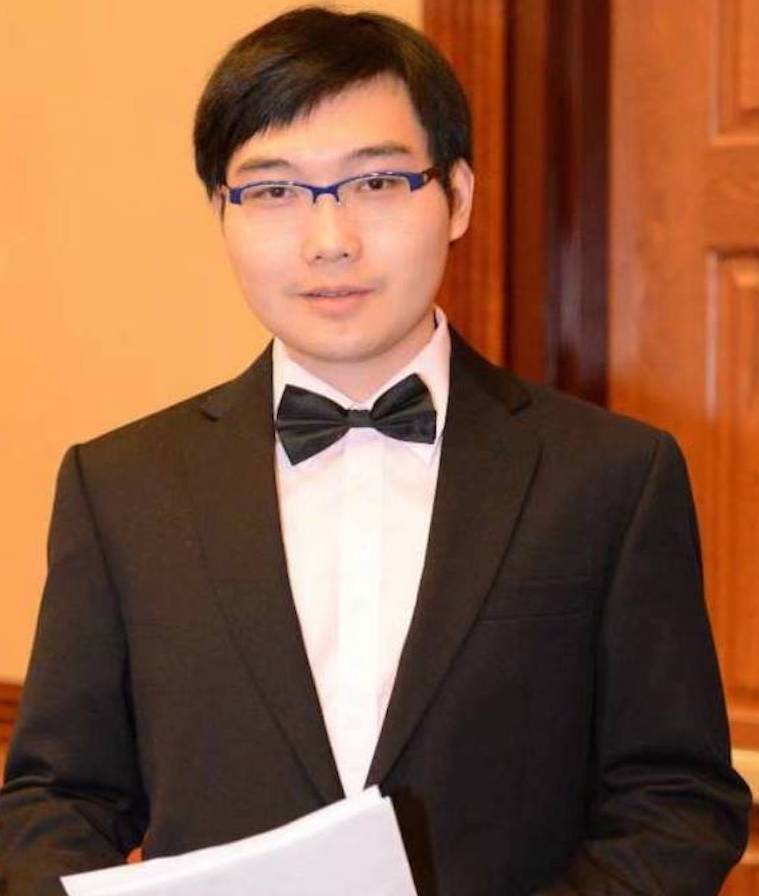}}]{Dr. Shujian Yu}
	will join the Department of Computer Science at the Vrije Universiteit Amsterdam in 2023 February as a tenure-track assistant professor. He is also affiliated with the Department of Physics and Technology at the UiT – The Arctic University of Norway. He was a machine learning research scientist at the NEC Labs Europe from 2019 to 2021. He received his Ph.D. degree in Electrical and Computer Engineering from the University of Florida in 2019, with a Ph.D. minor in Statistics. He received the 2020 International Neural Networks Society (INNS) Aharon Katzir Young Investigator Award for the contribution on the development of novel information theoretic measures for analysis and training of deep neural networks. He is also selected for the 2023 AAAI New Faculty Highlights.
\end{IEEEbiography}
\begin{IEEEbiography}[{\includegraphics[width=1in,height=1.25in,clip,keepaspectratio]{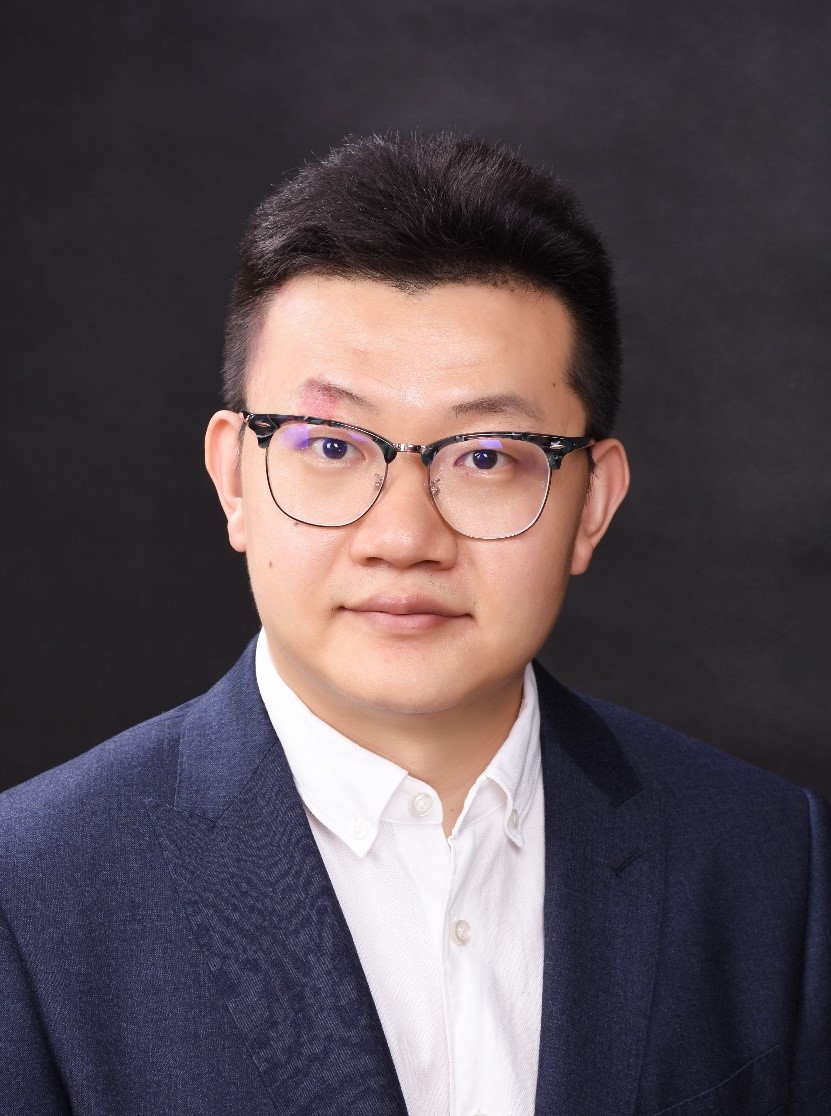}}]{Bo Dong}
	received his Ph.D. degree in computer science and technology from Xi’an Jiaotong University in 2014. He did postdoctoral research in the MOE Key Lab for Intelligent Networks and Network Security, Xi’an Jiaotong University from 2014 to 2017. He currently serves as the research director in the School of Continuing Education, Xi’an Jiaotong University. His research interests focus on data mining and intelligent e-Learning.
\end{IEEEbiography}

\end{document}